\newtheorem{theorem}{Theorem}
\newenvironment{proof of Proposition }[1][Proof of Proposition]{\noindent\textit{#1.} }{\ \rule{0.5em}{0.5em}}
\newenvironment{proof of Theorem 2}[1][Proof of Theorem 2]{\noindent\textit{#1.} }{\ \rule{0.5em}{0.5em}}
\newenvironment{proof of Theorem 3}[1][Proof of Theorem 3]{\noindent\textit{#1.} }{\ \rule{0.5em}{0.5em}}
\newenvironment{proof of Theorem 4}[1][Proof of
Theorem 4]{\noindent\textit{#1.} }{\ \rule{0.5em}{0.5em}}
\newenvironment{proof of Theorem 5}[1][Proof of Theorem 5]{\noindent\textit{#1.} }{\ \rule{0.5em}{0.5em}}
\title{\LARGE \bf
Observability Analysis of Graph SLAM-Based Joint Calibration of Multiple Microphone Arrays and Sound Source Localization}
	\author{Yuanzheng He, Jiang Wang, Daobilige Su, Kazuhiro Nakadai, Junfeng Wu, Shoudong Huang, \\
	Youfu Li, and He Kong
		\thanks{This paper is accepted to and going to be presented at 2023 IEEE/SICE International Symposium on System Integrations, Atlanta, USA. Corresponding author: H. Kong. Y. He and J. Wang contributed equally to this work. 
		Y. He, J. Wang, and H. Kong are with the Shenzhen Key Laboratory of Biomimetic Robotics and Intelligent Systems, Department of Mechanical and Energy Engineering, Southern University of Science and Technology (SUSTech), Shenzhen, 518055,
        China; they are also affiliated with the Guangdong Provincial Key Laboratory of Human-Augmentation and Rehabilitation Robotics in Universities, SUSTech, Shenzhen, 518055, China (e-mail: 12132259@mail.sustech.edu.cn; 12132297@mail.sustech.edu.cn; kongh@sustech.edu.cn). D. Su is with College of Engineering, China Agricultural University, Beijing, China (email: sudao@cau.edu.cn). K. Nakadai is with the Department of Systems and Control Engineering, Tokyo Institute of Technology, Tokyo, Japan (email: nakadai@ra.sc.e.titech.ac.jp). J. Wu is with the School of Data Science, The Chinese University of Hong Kong, Shenzhen, Shenzhen, P. R. China (email: junfengwu@cuhk.edu.cn). S. Huang is with  the Robotics Institute, University of Technology Sydney, Sydney, Australia (email: shoudong.huang@uts.edu.au). Y. Li is with the Department of Mechanical Engineering, City University of Hong Kong, Hong Kong SAR, China (email: meyfli@cityu.edu.hk).
		}
	}
\begin{document}

\maketitle
\thispagestyle{empty}
\pagestyle{empty}

\begin{abstract}

Multiple microphone arrays have many applications in robot audition, including sound source localization, audio scene perception and analysis, etc. However, accurate calibration of multiple microphone arrays remains a challenge because there are many unknown parameters to be identified, including the Euler angles, geometry, asynchronous factors between the microphone arrays. This paper is concerned with joint calibration of multiple microphone arrays and sound source localization using graph simultaneous localization and mapping (SLAM). By using a Fisher information matrix (FIM) approach, we focus on the observability
analysis of the graph SLAM framework for the above-mentioned calibration problem. We thoroughly investigate the identifiability of the unknown parameters, including the Euler angles, geometry, asynchronous effects between the microphone arrays, and the sound source locations. We establish necessary/sufficient conditions under which the FIM and the Jacobian matrix have full column rank, which implies the identifiability of the unknown parameters. These conditions are closely related to the variation in the motion of the sound source and the configuration of microphone arrays, and have intuitive and physical interpretations. We also discover several scenarios where the unknown parameters are not uniquely identifiable. All theoretical findings are demonstrated using simulation data.

\end{abstract}

\section{INTRODUCTION}

Microphone array-based robot audition systems can be used for a range of applications, such as sound source localization, active multi-mode perception, speech separation, and recognition of multiple sound sources \cite{Grondin2019}-\cite{Rascon C2017}. 
However, accurate calibration of microphone array-based robotic auditory sensors, as for other sensing modalities  such as camera and LIDAR \cite{LIU2022}-\cite{Jiao J2019}, is crucial for satisfactory performance. Hence, calibration of microphone array-based robot audition systems have received much attention in the recent literature. 

For example, a calibration technique was proposed in \cite{Perrodin 2012}, which
allowed estimating microphone position, source position and time offset independent of the calibration signal. Some researchers have tried to use frameworks combining SLAM and beamforming algorithms to perform online calibration of asynchronous
microphones without many measurements of transfer
functions \cite{Nakadai2011a}-\cite{Nakadai2012}. For microphone arrays with asynchronous effects (i.e., clock difference and initial time offset), a systematic examination and observability analysis of SLAM-based microphone array calibration and sound source localization was presented in \cite{Su2015}-\cite{Kong2021} via a FIM approach. However, the above-mentioned methods are only applicable for calibrating a single microphone array. 

Methods for estimating the parameters of multiple microphone arrays have been presented in \cite{Plinge2014}-\cite{Wozniak 2019}. Nevertheless, these methods assumed that the hardware synchronization or orientations of the microphone arrays were known, and only considered scenarios in 2-dimensions (2D). For calibrating multiple microphone arrays, it is necessary to consider not only the geometry and asynchronous effects among the arrays, but also the orientations of  microphone arrays. 

Simultaneous calibration of positions, orientations, time offsets among multiple microphone arrays and sound source location was explored in \cite{Nakadai2021}. In the former work, a combined cost function has been proposed that can allows for estimating the array position, orientation, and time offset concurrently, by using direction of arrival (DOA) information and the time difference of arrival (TDOA) measurements among microphone arrays. However, a thorough analysis regarding the parameter observability in the joint calibration of multiple microphone arrays and sound source localization is still lacking. 

In this study, we will use graph SLAM as a general framework for the above identification question, and concentrate on the parameter identifiablity of the corresponding SLAM problem. By using a FIM approach, we thoroughly investigate the identifiability of the unknown parameters, including the Euler angles,
geometry, asynchronous effects between the microphone arrays,
and the sound source locations. We establish necessary/sufficient
conditions under which the FIM and the Jacobian matrix
have full column rank, which implies the identifiability of
the unknown parameters. These conditions are closely related
to the variation in the motion of the sound source and the
configuration of microphone arrays, and have intuitive and
physical interpretations. We also discover several scenarios
where the unknown parameters are not uniquely identifiable.
All theoretical findings have been validated using simulation data. For readability, most proofs of the theoretical results are put in the Appendix.


\textbf{Notation}: Denote $x$, $\mathbf{x}$, and $\mathbf{X}$ as
scalars, vectors, and matrices, respectively. $\mathbf{X}^{\mathrm{T}}$ represents the transpose of matrix $\mathbf{X}$. $\mathbf{I}_{n}$ stands for the identity matrix of $n$ dimensions. $\mathbb{R}^{n}$ denotes the $n$-dimensional
Euclidean space. $[a_{1};\cdots;a_{n}]$ denotes $[a_{1}^{\mathrm{T}},\cdots,a_{n}^{\mathrm{T}}]^{\mathrm{T}}$,
where $a_{1},\cdots,a_{n}$ are scalars/vectors/matrices with proper
dimensions. $diag_{n}(\mathbf{A})$ denotes a block diagonal matrix with $\mathbf{A}$ as block diagonal entries for $n$ times; $diag(\mathbf{A},\mathbf{B})$
denotes a block diagonal matrix with $\mathbf{A}$ and $\mathbf{B}$ as
its block diagonal entries; and $\mathbf{0}_{a\times b}$ as a matrix
of dimension $a\times b$ with its all entries as 0. $\mathbf{X}>0$
means that $\mathbf{X}$ is a positive definite matrix. We denote $\left\Vert \mathbf{x}\right\Vert _{\mathbf{P}}^{2}=\mathbf{x}^{%
\mathrm{T}}\mathbf{Px}$. Vectors/matrices, with dimensions not explicitly stated, are assumed to be algebraically compatible.
\section{PRELIMINARIES AND PROBLEM STATEMENT}

\subsection{Graph SLAM for Multiple Microphone Arrays Calibration}

In a calibration scene containing $N$ distributed microphone arrays, the microphone arrays capture $K$ consecutive acoustic signals emitted by a single acoustic source at several spatial positions. As shown in Fig. \ref{TDOA} (here we take $N$=3 as an example), in this paper, simultaneous sound source localization and multiple microphone arrays calibration are performed in a graph-based SLAM framework with three microphone arrays and one moving source.

\begin{figure}[tb]
\centering {\includegraphics[width=0.85\columnwidth]{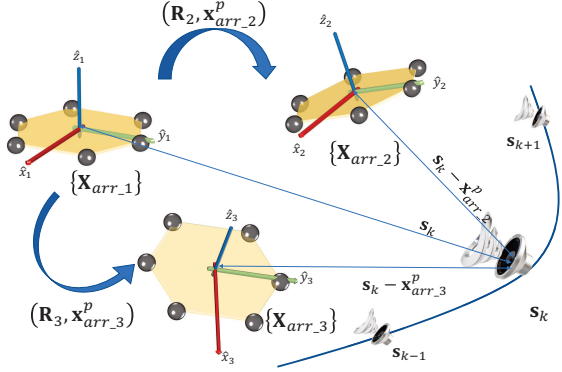}}
\caption{Geometry of the problem setup and graph-based SLAM framework}
\label{TDOA}
\end{figure}

In Fig. \ref{TDOA}, $\mathbf{x}_{arr\_i}^{p}$ represents the location
of the $i\raisebox{0mm}{-}th$ microphone array in the global reference frame, and
any two of arrays are in different positions. We assume that there is a local reference frame $\left\{ \mathrm{\mathbf{x}_{\mathit{arr\_i}}}\right\} $ attached to every microphone array; we choose $\left\{ \mathrm{\mathbf{x}_{\mathit{arr\_\mathrm{1}}}}\right\} $ as the global reference frame; $\mathbf{R}_{\mathit{\mathrm{\mathit{i}}}}$
is the rotation matrix of reference frame $\left\{ \mathrm{\mathbf{x}}_{arr\_1}\right\}$
to the frame $\left\{\mathrm{\mathbf{x}}_{arr\_i}\right\} $
with the rotation angle vector $\mathbf{x}_{arr\_i}^{\theta}$; $\mathbf{s}^{k}$
is the sound source position at time $t^{k},$ $k=1,\ldots,K$, with respect to (w.r.t.)
$\left\{ \mathrm{\mathbf{x}_{\mathit{arr\_\mathrm{1}}}}\right\} $,
where $K$ is the total number of time steps; $d_{i}^{k}$ is the
distance between the $i\raisebox{0mm}{-}th$ \textsuperscript{}microphone array
and the sound source at time instance $t^{k}$. Note that in the calibration process, the multiple microphone
arrays remain static while the sound source moves around in the environment.

Here we consider the most general scenario where there are starting time offset and clock drift among different microphone arrays (we assume that the configuration of each microphone array, including its geometry, is known). When the sound source sends the $k\raisebox{0mm}{-}th$ acoustic signal, the DOA information, i.e., the direction vector of sound source relative to the $i\raisebox{0mm}{-}th$ microphone array frame $\left\{ \mathrm{\mathbf{x}}_{arr\_i}\right\} $
is obtained as follows: 
\begin{equation}
\mathbf{d}_{i1}^{k}=\mathbf{R_{\mathit{\mathrm{\mathit{i}}}}^{\mathrm{\mathit{\mathrm{T}}}}}\frac{\mathbf{s}^{k}-\mathbf{x_{\mathit{arr\_i}}^{\mathit{p}}}}{d_{i}^{k}}.\label{expression_DOA}
\end{equation}

Denote $d_{i}^{k}$, for $i=1,\ldots{,}N$, as the distance between the $i\raisebox{0mm}{-}th$ microphone array and the sound source at the $k\raisebox{0mm}{-}th$ sampling instant. The TDOA information between the  $i\raisebox{0mm}{-}th$ \textsuperscript{}and the first microphone arrays can be expressed as follows: 
\begin{equation}
T_{i1}^{k}=\frac{d_{i}^{k}}{c}-\frac{d_{1}^{k}}{c}+x_{arr\_i}^{\tau}+k{\Delta}_{t}x_{arr\_i}^{\delta}\label{eq:TDOA}
\end{equation}
for $i=2,\ldots{,}N$, where $c$ represents the sound speed in the
air; the scalar (unknown) constant variables $x_{arr\_i}^{\tau}$
and $x_{arr\_i}^{\delta}$ represent the starting time offset and
the clock difference per second of each microphone array, respectively;
${\Delta}_{t}$ is the time interval between two consecutive sound
signals. As mentioned above, the first microphone array is used as the reference, hence 
\begin{equation}\nonumber
\mathbf{x}_{arr\_1}^{p}=\mathbf{0},\text{ } \mathbf{x}_{arr\_1}^{\theta}=\mathbf{0}, \text{ }x_{arr\_1}^{\tau}=0,\text{ }x_{arr\_1}^{\delta}=0.
\label{reference}
\end{equation}
The Euler angles, starting time offsets, and clock differences of the microphone arrays will be determined along with the source positions in the calibration process.

The location and the rotation angle vector of  $i\raisebox{0mm}{-}th$ microphone array (where $i=2,\ldots{,}N$), i.e., $\mathbf{x}_{arr\_i}^{p}$ and $\mathbf{x}_{arr\_i}^{\theta}$, can be expressed as: 
\begin{equation}\nonumber
\begin{array}{c}
\mathbf{x}_{arr\_i}^{p}=\left[x_{arr\_i}^{x};x_{arr\_i}^{y};x_{arr\_i}^{z}\right], \text{ } \mathbf{x}_{arr\_i}^{\theta}=\left[\theta_{arr\_i}^{x};\theta_{arr\_i}^{y};\theta_{arr\_i}^{z}\right],
\end{array}
\label{eq:X}
\end{equation}
respectively, where $\theta_{arr\_i}^{x},\theta_{arr\_i}^{y}$, and $\theta_{arr\_i}^{z}$
take values in the range of $[0,2\pi],[0,\pi]$, and $[0,2\pi],$ respectively.
Denote the unknown parameters w.r.t. the  $i\raisebox{0mm}{-}th$ microphone array as:
\begin{equation}\nonumber
\mathbf{x}_{arr\_i}=\left[\mathbf{x}_{arr\_i}^{p};\mathbf{x}_{arr\_i}^{\theta};x_{arr\_i}^{\tau};x_{arr\_i}^{\delta}\right].\label{eq_mic_state_vector_3d}
\end{equation}
Hence, all the unknown parameters w.r.t. the microphone arrays are: 
\begin{equation}\nonumber
\mathbf{x}_{arr}=\left[\mathbf{x}_{arr\_2};\cdots;\mathbf{x}_{arr\_N}\right].\label{eq_mic_state_vector}
\end{equation}
Denote the sound source position at time $t^{k},$ $k=1,\ldots,K$
as:
\begin{equation}\nonumber
\mathbf{s}^{k}=\left[s_{x}^{k};s_{y}^{k};s_{z}^{k}\right]. \label{eq:Source location}
\end{equation}
Thus, all unknown parameters to be identified are: 
\begin{equation}\nonumber
\mathbf{x}=\left[\mathbf{x}_{arr};\mathbf{s}^{1};\cdots;\mathbf{s}^{K}\right].\label{eq_state_vector}
\end{equation}
We denote the ideal TDOA and DOA measurement information at the  $k\raisebox{0mm}{-}th$\textsuperscript{} time instance as: 
\begin{equation}
\mathbf{z}^{k}=\left[T_{21}^{k};\mathbf{d}_{21}^{k};T_{31}^{k};\mathbf{d}_{31}^{k};\cdots;T_{N1}^{k};\mathbf{d}_{N1}^{k}\right]\in\mathbf{\mathbb{R}}^{4(N-1)}.\label{eq_p_l_ob}
\end{equation}
The real values of DOA and TDOA measurements at time $k$ are also subject to the influence of Gaussian noise as follows:
\begin{equation}
\mathbf{y}^{k}=\mathbf{z}^{k}+\mathbf{v}^{k}\label{measurements}
\end{equation}
where $\mathbf{z}^{k}$ is defined in (\ref{eq_p_l_ob}), $\mathbf{v}^{k}\sim\mathcal{N}(0,\mathbf{P})$,
with $\mathbf{\mathbb{\mathbf{P}}}>0\in\mathbf{\mathbb{R}}^{4(N-1)\times4(N-1)}$.
We assume that the sound source relative position between two consecutive time steps can be measured with Gaussian noise, i.e.,
\begin{equation}
\mathbf{s}_{\Delta}^{k}=\mathbf{s}^{k+1}-\mathbf{s}^{k}+\mathbf{w}^{k}\label{random_walk}
\end{equation}
where $k=1,...,K-1$, $\mathbf{w}^{k}\sim\mathcal{N}(0,\mathbf{Q})$,
with $\mathbf{Q}>0\in\mathbf{\mathbb{R}}^{3\times3}$. 
We combine the relative position measurements, the TDOA, and DOA measurements as:
\begin{equation}\nonumber
\mathbf{m}=\left[\mathbf{y}_{1};\mathbf{s}_{1}^{\Delta};\mathbf{y}_{2};\mathbf{s}_{2}^{\Delta};\cdots;\mathbf{y}_{K-1};\mathbf{s}_{K-1}^{\Delta};\mathbf{y}_{K}\right]\label{eq_ob_M-1}
\end{equation}
where $\mathbf{s}_{k}^{\Delta}$ and $\mathbf{y}_{k}$ are defined
in (\ref{random_walk}) and (\ref{measurements}), respectively. Then the models in (\ref{measurements})-(\ref{random_walk}) can
be rewritten in a compact form as:
\begin{equation}
\mathbf{m}=\mathbf{g}(\mathbf{x})+\mathbf{\gamma}\label{eq_ob_M_2-1}
\end{equation}
where $\mathbf{g}(\mathbf{x})$ is the combined observation model,
and $\mathbf{\gamma}\sim\mathcal{N}(0,\mathbf{W})$ is the noise of
combined observations with 
\begin{equation}
\mathbf{W}=diag(diag_{K-1}(\mathbf{P,Q),P}).\label{eq_ob_W-1}
\end{equation}
Based on the above discussions, our focus is
to identify the parameters of multiple microphone arrays (microphone
arrays positions, orientations, time offsets, and clock offsets) and
sound source positions. As shown in Fig. \ref{TDOA}, the graph-based
SLAM framework is a feasible solution to the above problems by treating the moving sound source as a robot and the multiple microphone arrays as landmarks \cite{Grisetti2010}. As in \cite{Su2015}-\cite{Su2020}, the parameter identification problem for asynchronous multiple microphone arrays can be treated as the following standard least squares (LS) problem using graph SLAM:
\begin{equation}
\arg \min\limits_{\widehat{\mathbf{x}}}\left\Vert \mathbf{m-}g(\widehat{%
\mathbf{x}})\right\Vert _{\mathbf{W}^{-1}}^{2} \label{eq_ls}
\end{equation}
where $\widehat{\mathbf{x}}$ represents the estimate of all the unknown parameters. The measurements obtained by different microphone arrays constitute the spatial constraints and can be included in the above LS to improve estimation accuracy (due to limited space, we will not elaborate on these details in the remainder of the paper).

\subsection{The Corresponding FIM and Problem Statement}

By using the FIM approach, we know that the observability of the graph-based SLAM problem described earlier depends on whether the FIM is non-singular \cite{Kong2021}. 
For non-random vector parameter estimation, the FIM of an unbiased
estimator is defined as 
\begin{equation}
\mathbf{I}_{FIM}\triangleq E\{[\nabla_{\mathbf{x}}ln\Lambda(\mathbf{x})][\nabla_{\mathbf{x}}ln{\Lambda}(\mathbf{x})]^{\mathrm{T}}\}\label{eq_ob_J-1}
\end{equation}
where $\Lambda(\mathbf{x})\triangleq p(\mathbf{m}|\mathbf{x})$ is
the likelihood function, and the partial derivatives should be calculated at the true value of $\mathbf{x}$ \cite[chap. 2]{Bar-Shalom2004}.
By following similar arguments as those in \cite{Dissanayake2008} and
\cite{Huang2016}, the FIM in (\ref{eq_ob_J-1}) for models in (\ref{eq_ob_M_2-1})-(\ref{eq_ob_W-1})
can be formulated as 
\begin{equation}
\mathbf{I}_{FIM}\mathbf{=J}^{\mathrm{T}}\mathbf{W}^{-1}\mathbf{J}\label{J-1}
\end{equation}
where $\mathbf{J}$ is the Jacobian matrix of the function $\mathbf{g}(\bullet)$
in (\ref{eq_ob_M_2-1}) w.r.t. $\mathbf{x}$ \cite[pp. 569]{Siciliano2009},
and its explicit expression will be given later in the paper (see
in (\ref{eq:Jacobi})). When $\mathbf{W}>0,$ one has that 
\begin{equation}\nonumber
rank(\mathbf{J})=rank(\mathbf{I}_{FIM}).\label{eq_ob_rank_J-1}
\end{equation}
The question of interests is formally stated as follows.

\textit{Problem:}
\label{problem1-1}Given the problem setup described as above, find
conditions under which the FIM $\mathbf{I}_{FIM}$ defined in (\ref{eq_ob_J-1})
is non-singular, or equivalently, the Jacobian matrix $\mathbf{J}$ is of full column rank. 

\section{MAIN RESULTS}
By leveraging the structure of the Jacobian matrix $\mathbf{J}$ associated with the SLAM formulation, we next establish necessary/sufficient conditions for the non-singularity of the $\mathbf{I}_{FIM}$ and the observability of the SLAM problem. In addition, we will reveal some special cases when the Jacobian matrix or FIM cannot have full column rank.

\subsection{Main Results}
From the definition of the Jacobian matrix \cite[pp. 569]{Siciliano2009}, we know that $\mathbf{J}\in\mathbb{R}^{g_{1}\times g_{2}}$, $g_{1}=4(N-1)K+3(K-1)$,
$g_{2}=8(N-1)+3K$. From (\ref{eq_ob_J-1})-(\ref{J-1}), a necessary
and sufficient condition for $\mathbf{I}_{FIM}$ to be nonsingular
is that $\mathbf{J}$ has full column rank. For $\mathbf{J}$ to be of full column rank, it is necessary that 
\begin{equation}\nonumber
\begin{array}{l}
4(N-1)K+3(K-1)\geq8(N-1)+3K\\
\implies K\geqslant\left\lceil 2+\dfrac{3}{4(N-1)}\right\rceil,
\end{array}\label{eq:neccessary_condion}
\end{equation}
where $%
\left\lceil \cdot \right\rceil $ stands for the ceiling operation generating
the least integer not less than the number within the operator. We then have the following results.

\textit{Proposition:} The Jacobian $\mathbf{J}$ can be written as
\begin{equation}
\mathbf{J}=\underset{\mathbf{J}_{1}}{\underbrace{\left[\begin{array}{c}
\mathbf{L}^{1}\\
\mathbf{0}_{3\times8\widetilde{N}}\\
\mathbf{L}^{2}\\
\mathbf{0}\\
\vdots\\
\mathbf{L}^{K-1}\\
\mathbf{0}\\
\mathbf{L}^{K}
\end{array}\right.}}\underset{\mathbf{J}_{2}=\left[\begin{array}{llll}
\mathbf{J}_{2}^{1} & \mathbf{J}_{2}^{2}\cdots & \mathbf{J}_{2}^{K-1} & \mathbf{J}_{2}^{K}\end{array}\right]}{\underbrace{\left.\begin{array}{ccccc}
\mathbf{T}^{1} & \mathbf{0}_{4\widetilde{N}\times3} & \cdots & \mathbf{0} & \mathbf{0}\\
-\mathbf{I}_{3} & \mathbf{I}_{3} & \cdots & \mathbf{0} & \mathbf{0}\\
\mathbf{0}_{4\widetilde{N}\times3} & \mathbf{T}^{2} & \cdots & \mathbf{0} & \mathbf{0}\\
\mathbf{0} & -\mathbf{I}_{3} & \cdots & \mathbf{0} & \mathbf{0}\\
\vdots & \vdots & \ddots & \vdots & \vdots\\
\mathbf{0} & \mathbf{0} & \cdots & \mathbf{T}^{K-1} & \mathbf{0}\\
\mathbf{0} & \mathbf{0} & \cdots & -\mathbf{I}_{3} & \mathbf{I}_{3}\\
\mathbf{0} & \mathbf{0} & \cdots & \mathbf{0} & \mathbf{T}^{K}
\end{array}\right]}}\label{eq:Jacobi}
\end{equation}
where $\widetilde{N}=N-1$, expressions of $\mathbf{L}^{k}$, $\mathbf{T}^{k}$,
for $k=1,...,K$, can be found in (\ref{eq:L}) and (\ref{eq:part TK}).


\begin{theorem} The Jacobian matrix $\mathbf{J}$ is of full column
rank if and only if the following matrix 
\begin{equation}\nonumber
\mathbf{F}=\underset{\mathbf{L}}{\underbrace{\left[\begin{array}{c}
\mathbf{L}^{1}\\
\mathbf{L}^{2}\\
\vdots\\
\mathbf{L}^{K}
\end{array}\right.}}\underset{\mathbf{T}}{\underbrace{\left.\begin{array}{c}
\mathbf{T}^{1}\\
\mathbf{T}^{2}\\
\vdots\\
\mathbf{T}^{K}
\end{array}\right]}}\label{eq:L,T}
\end{equation}
is of full column rank. \end{theorem}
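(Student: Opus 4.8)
The plan is to prove the equivalence by comparing the null spaces of $\mathbf{J}$ and $\mathbf{F}$, exploiting the staircase structure of $\mathbf{J}_{2}$ in (\ref{eq:Jacobi}): the rows carrying the $\pm\mathbf{I}_{3}$ blocks are precisely the Jacobians of the relative-position constraints (\ref{random_walk}), and they chain consecutive source perturbations together.

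First I would partition an arbitrary vector $\mathbf{v}$ in the domain of $\mathbf{J}$ conformally with the column blocks of (\ref{eq:Jacobi}), namely $\mathbf{v}=[\mathbf{a};\mathbf{b}^{1};\cdots;\mathbf{b}^{K}]$ with $\mathbf{a}\in\mathbb{R}^{8\widetilde{N}}$ the perturbation of the microphone-array parameters $\mathbf{x}_{arr}$ and $\mathbf{b}^{k}\in\mathbb{R}^{3}$ the perturbation of $\mathbf{s}^{k}$. Reading $\mathbf{J}\mathbf{v}=\mathbf{0}$ block row by block row, the rows through $\mathbf{L}^{k}$ and $\mathbf{T}^{k}$ give $\mathbf{L}^{k}\mathbf{a}+\mathbf{T}^{k}\mathbf{b}^{k}=\mathbf{0}$ for $k=1,\dots,K$, while the rows through the $\pm\mathbf{I}_{3}$ blocks give $\mathbf{b}^{k+1}-\mathbf{b}^{k}=\mathbf{0}$ for $k=1,\dots,K-1$.

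The second family of equations forces $\mathbf{b}^{1}=\mathbf{b}^{2}=\cdots=\mathbf{b}^{K}=:\mathbf{b}$; substituting this common value into the first family gives exactly $\mathbf{F}\,[\mathbf{a};\mathbf{b}]=\mathbf{0}$. Conversely, for any $[\mathbf{a};\mathbf{b}]$ with $\mathbf{F}[\mathbf{a};\mathbf{b}]=\mathbf{0}$ the replicated vector $[\mathbf{a};\mathbf{b};\cdots;\mathbf{b}]$ lies in $\ker\mathbf{J}$. Hence $\mathbf{v}\mapsto[\mathbf{a};\mathbf{b}^{1}]$ is a linear isomorphism from $\ker\mathbf{J}$ onto $\ker\mathbf{F}$, so $\mathbf{J}$ has trivial kernel (i.e.\ full column rank) iff $\mathbf{F}$ does. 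A byproduct of the dimension count is the sharper identity $\mathrm{rank}(\mathbf{J})=\mathrm{rank}(\mathbf{F})+3(K-1)$, which is consistent with the necessary condition recorded above.

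There is no analytic difficulty here: the theorem is purely a consequence of the telescoping block pattern of $\mathbf{J}_{2}$, so the only things to be careful about are the bookkeeping of the block sizes ($\mathbf{L}^{k}\in\mathbb{R}^{4\widetilde{N}\times 8\widetilde{N}}$, $\mathbf{T}^{k}\in\mathbb{R}^{4\widetilde{N}\times 3}$) and checking that the projection $\mathbf{v}\mapsto[\mathbf{a};\mathbf{b}^{1}]$ is genuinely injective on $\ker\mathbf{J}$ (a nonzero kernel element cannot have $\mathbf{a}=\mathbf{0}$ together with all $\mathbf{b}^{k}=\mathbf{0}$), which is what secures both directions of the ``if and only if''. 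The degenerate case $K=1$ is immediate since then $\mathbf{J}=\mathbf{F}$. All of the substantive work is deferred to the later results, which must determine when the reduced matrix $\mathbf{F}$ — equivalently the pair $(\mathbf{L},\mathbf{T})$ built from the DOA/TDOA geometry — actually attains full column rank.
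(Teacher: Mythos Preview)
Your argument is correct: the telescoping $\pm\mathbf{I}_{3}$ rows force all source perturbations $\mathbf{b}^{k}$ to coincide, after which the remaining rows are exactly $\mathbf{F}[\mathbf{a};\mathbf{b}]=\mathbf{0}$, and the map $[\mathbf{a};\mathbf{b}^{1};\ldots;\mathbf{b}^{K}]\mapsto[\mathbf{a};\mathbf{b}^{1}]$ is indeed a linear isomorphism $\ker\mathbf{J}\to\ker\mathbf{F}$, which yields both directions and the rank identity $\mathrm{rank}(\mathbf{J})=\mathrm{rank}(\mathbf{F})+3(K-1)$.

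As for comparison with the paper: the paper does not actually present a proof of this theorem---it simply states that the argument follows the one in \cite{Kong2021} and omits it. Your null-space/kernel argument is the natural way to extract this reduction from the block structure in (\ref{eq:Jacobi}), and is precisely the kind of argument that citation points to. In short, you have supplied the proof the paper skipped, and there is nothing to add or correct.
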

\begin{proof}
The proof follows similarly from \cite{Kong2021}
and is skipped here.
\end{proof}

\begin{theorem} The Jacobian matrix $\mathbf{J}$ is of full column
rank only if the matrix $\mathbf{\bar{T}}$ and $\mathbf{\mathbf{\mathbf{\bar{L}}_{i}}}$, for $i=2,\ldots,N,$ are of full column rank, respectively, where

\begin{equation}
\bar{\mathbf{T}}=\left[\begin{array}{c}
\mathbf{0_{\mathrm{2\times3}}}\\
{\scriptstyle {\scriptstyle -\dfrac{{\scriptstyle \left({\scriptstyle \mathbf{s}^{1}}\right)^{\mathrm{T}}}}{{\scriptstyle cd_{1}^{1}}}+2\dfrac{{\scriptstyle \left({\scriptstyle \mathbf{s}^{2}}\right)^{\mathrm{T}}}}{{\scriptstyle cd_{1}^{2}}}}-\dfrac{{\scriptstyle \left({\scriptstyle \mathbf{s}^{3}}\right)^{\mathrm{T}}}}{{\scriptstyle cd_{1}^{3}}}}\\
{\scriptstyle {\scriptstyle -2\dfrac{{\scriptstyle \left({\scriptstyle \mathbf{s}^{1}}\right)^{\mathrm{T}}}}{{\scriptstyle cd_{1}^{1}}}}+3\dfrac{{\scriptstyle \left({\scriptstyle \mathbf{s}^{2}}\right)^{\mathrm{T}}}}{{\scriptstyle cd_{1}^{2}}}-\dfrac{{\scriptstyle \left({\scriptstyle \mathbf{s}^{4}}\right)^{\mathrm{T}}}}{{\scriptstyle cd_{1}^{4}}}}\\
\vdots\\
{\scriptstyle {\scriptstyle \left({\scriptstyle -\left(K-2\right)\dfrac{{\scriptstyle \left({\scriptstyle \mathbf{s}^{1}}\right)^{\mathrm{T}}}}{{\scriptstyle cd_{1}^{1}}}}+\left(K-1\right){\scriptstyle \dfrac{{\scriptstyle \left({\scriptstyle \mathbf{s}^{2}}\right)^{\mathrm{T}}}}{{\scriptstyle cd_{1}^{2}}}}-\dfrac{{\scriptstyle \left({\scriptstyle \mathbf{s}^{K}}\right)^{\mathrm{T}}}}{{\scriptstyle cd_{1}^{K}}}\right)}}\\
\mathbf{0_{\mathrm{\mathit{3K}\times3}}}
\end{array}\right]\label{eq:T-BAR}
\end{equation}

and

\begin{equation}
\begin{array}{c}
\mathbf{\mathbf{\bar{L}}_{i}}=\left[\begin{array}{cccc}
1 & 0 & \mathbf{0} & \mathbf{0}\\
0 & 1 & \mathbf{0} & \mathbf{0}\\
0 & 0 & {\scriptstyle \mathbf{h}_{arr\_i}^{1}-2\mathbf{h}_{arr\_i}^{2}+\mathbf{h}_{arr\_i}^{3}} & \mathbf{0}\\
0 & 0 & {\scriptstyle 2\mathbf{h}_{arr\_i}^{1}-3\mathbf{h}_{arr\_i}^{2}+\mathbf{h}_{arr\_i}^{4}} & \mathbf{0}\\
\vdots & \vdots & \vdots & \vdots\\
0 & 0 & {\scriptstyle \left(K-2\right)\mathbf{h}_{arr\_i}^{1}-\left(K-1\right)\mathbf{h}_{arr\_i}^{2}+\mathbf{h}_{arr\_i}^{K}} & \mathbf{0}\\
\mathbf{0} & \mathbf{0} & \mathbf{U}_{arr\_i}^{1} & \mathbf{V}_{arr\_i}^{1}\\
\mathbf{0} & \mathbf{0} & \mathbf{U}_{arr\_i}^{2} & \mathbf{V}_{arr\_i}^{2}\\
\vdots & \vdots & \vdots & \vdots\\
\mathbf{0} & \mathbf{0} & \mathbf{U}_{arr\_i}^{K} & \mathbf{V}_{arr\_i}^{K}
\end{array}\right]\end{array},\label{eq: L-BAR}
\end{equation}
where $\mathbf{h}$, $\mathbf{U}$, and $\mathbf{V},$ can be found in (\ref{eq:Block of neccessary matrix}).

\end{theorem}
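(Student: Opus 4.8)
The plan is to prove the necessity of the two rank conditions by exhibiting explicit null-vectors of $\mathbf{J}$ whenever $\bar{\mathbf{T}}$ or some $\bar{\mathbf{L}}_i$ fails to have full column rank, and to do this by reducing the full matrix $\mathbf{J}$ (equivalently, the matrix $\mathbf{F}=[\mathbf{L}\ \mathbf{T}]$ from Theorem~1) to the smaller matrices $\bar{\mathbf{T}}$ and $\bar{\mathbf{L}}_i$ via elementary row and column operations. By Theorem~1 it suffices to work with $\mathbf{F}$, so first I would write out the blocks $\mathbf{L}^k$ and $\mathbf{T}^k$ explicitly from (\ref{eq:L}) and (\ref{eq:part TK}), separating in each the rows coming from the DOA measurements $\mathbf{d}_{i1}^k$ and the rows coming from the TDOA measurements $T_{i1}^k$, and the columns corresponding to $\mathbf{x}_{arr\_i}$ (further split into the position block $\mathbf{x}_{arr\_i}^p$, the Euler-angle block $\mathbf{x}_{arr\_i}^\theta$, and the asynchronous scalars $x_{arr\_i}^\tau,x_{arr\_i}^\delta$) versus the columns corresponding to the source positions $\mathbf{s}^k$.

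The key combinatorial device is the differencing that produces the coefficients $1,-2,1$ then $2,-3,1$ then $(K-2),-(K-1),1$ appearing in (\ref{eq:T-BAR}) and (\ref{eq: L-BAR}): these are exactly the coefficients that annihilate the affine-in-$k$ contribution $x_{arr\_i}^\tau + k\Delta_t x_{arr\_i}^\delta$ in the TDOA model (\ref{eq:TDOA}) and, simultaneously, the telescoping contribution of the odometry rows $-\mathbf{I}_3,\mathbf{I}_3$ linking consecutive source states. Concretely, I would form, for each $k\ge 3$, the row combination (row $k$) $-$ (appropriate multiple of row $1$) $+$ (appropriate multiple of row $2$) among the TDOA rows, chosen so that the $x_{arr\_i}^\tau$ and $x_{arr\_i}^\delta$ columns are zeroed out; what survives in the $\mathbf{s}$-columns is a linear combination of the $1/(cd_i^k)$ direction terms, and in particular the first-array distance terms $1/(cd_1^k)$ give precisely the nonzero block of $\bar{\mathbf{T}}$, while the $i$-th array terms give the $\mathbf{h}_{arr\_i}^k$ combinations in $\bar{\mathbf{L}}_i$. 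The DOA rows, after using the $\mathbf{R}_i$-structure, contribute the identity blocks (the ``$1$''s in the first two rows of $\bar{\mathbf{L}}_i$) and the $\mathbf{U}_{arr\_i}^k,\mathbf{V}_{arr\_i}^k$ blocks. One then argues that a nontrivial null-vector of $\bar{\mathbf{T}}$ lifts to a nontrivial null-vector of $\mathbf{F}$ supported on a synchronized rigid-motion/clock perturbation of all source states, and likewise a null-vector of any $\bar{\mathbf{L}}_i$ lifts to a perturbation of $\mathbf{x}_{arr\_i}^\theta,x_{arr\_i}^\tau,x_{arr\_i}^\delta$ together with a compensating source perturbation; both contradict full column rank of $\mathbf{J}$, establishing the ``only if''.

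I would organize the argument as: (i) substitute the explicit forms of $\mathbf{L}^k,\mathbf{T}^k$ and perform the column permutation grouping the $x_{arr\_i}$-columns; (ii) perform the row reduction on the odometry rows to eliminate the auxiliary source columns, turning $\mathbf{F}$ into a block matrix whose column-rank deficiency is inherited from its diagonal-type blocks; (iii) carry out the $1,-2,1;\,2,-3,1;\dots$ differencing on the TDOA rows to kill the $\tau,\delta$ affine terms, isolating $\bar{\mathbf{T}}$ (the $d_1^k$ part, common to all arrays) and the $\bar{\mathbf{L}}_i$ (the per-array part); (iv) conclude that column-rank deficiency of $\bar{\mathbf{T}}$ or of any $\bar{\mathbf{L}}_i$ forces column-rank deficiency of $\mathbf{F}$, hence of $\mathbf{J}$. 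The expressions (\ref{eq:Block of neccessary matrix}) for $\mathbf{h},\mathbf{U},\mathbf{V}$ are the bookkeeping output of steps (i)–(iii) and I would simply record them rather than re-derive every entry.

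The main obstacle I anticipate is step (iii): bookkeeping the interaction between the Euler-angle columns and the source-position columns. Because $\mathbf{d}_{i1}^k=\mathbf{R}_i^{\mathrm T}(\mathbf{s}^k-\mathbf{x}_{arr\_i}^p)/d_i^k$ is nonlinear in both $\mathbf{x}_{arr\_i}^\theta$ and $\mathbf{s}^k$, the Jacobian blocks $\mathbf{U}_{arr\_i}^k$ (derivative w.r.t.\ $\mathbf{x}_{arr\_i}^\theta$) and the source-direction derivatives are entangled, and one must be careful that the row operations used to expose $\bar{\mathbf{T}}$ do not inadvertently destroy the $\bar{\mathbf{L}}_i$ structure (and vice versa). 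The resolution is that the TDOA rows and DOA rows occupy disjoint row-blocks and the $x_{arr\_i}^\tau,x_{arr\_i}^\delta$ columns appear only in the TDOA rows, so the two differencing computations decouple; verifying this decoupling cleanly — i.e.\ that the column operations needed for $\bar{\mathbf{T}}$ act trivially on the blocks defining $\bar{\mathbf{L}}_i$ — is the delicate point, after which the necessity claim follows by a standard "a zero column-combination here gives a zero column-combination there" argument.
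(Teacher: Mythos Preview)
Your approach is essentially the paper's: reduce to $\mathbf{F}$ via Theorem~1, regroup rows by array index, then perform elementary operations to reach the block form in (\ref{eq:F bar prime}), from which necessity is immediate since elementary operations preserve column rank. Two minor clean-ups: first, once Theorem~1 is invoked there are no odometry rows left in $\mathbf{F}$ and only a single $3$-column source block, so your step~(ii) is vacuous; second, the move that actually separates $\bar{\mathbf{T}}$ (the $d_1^k$ part, common to all arrays) from the per-array blocks is a \emph{column} operation---adding the array-position column block $[\mathbf{h}_i^k;\mathbf{U}_i^k]$ onto the source column $\mathbf{T}_{arr\_i}$ to cancel the $-\mathbf{h}_i^k,-\mathbf{U}_i^k$ pieces in (\ref{eq:part TK})---which you flagged as the delicate decoupling but is in fact a single elementary step done \emph{before} the row differencing. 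Your physical reading of the lifted null-vectors is also slightly off (a null-vector of $\bar{\mathbf{L}}_i$ lives purely in the position and angle parameters of array~$i$, since its first two columns are always pivots; a null-vector of $\bar{\mathbf{T}}$ corresponds to a source perturbation compensated by array-position shifts, not a clock perturbation), but this does not affect correctness because the argument goes through by rank invariance alone.
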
 
\begin{theorem} The Jacobian matrix $\mathbf{J}$ is of full column
rank if the following statements hold.

(i) Any matrix consisting of the $(j-1)\raisebox{0mm}{-}th$ column block and the
last column block in $\overline{\mathbf{F}}^{\prime}$ is of full
column rank, $2\leq j\leq N$.

(ii) All matrices $\mathbf{\mathbf{\bar{L}}_{i}}$ in (\ref{eq:F bar prime}),
for the multiple microphone arrays system, $i=2,\ldots,N$ and $i\neq j$ are of full column rank.\end{theorem}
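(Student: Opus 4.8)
The plan is to prove Theorem~3 through Theorem~1: it suffices to show that the hypotheses (i)--(ii) force the matrix $\mathbf{F}$ of Theorem~1 --- equivalently the matrix $\overline{\mathbf{F}}^{\prime}$ of (\ref{eq:F bar prime}) --- to have full column rank. First I would bring $\mathbf{F}$ into the block form $\overline{\mathbf{F}}^{\prime}$ by the same chain of rank--preserving operations that underlies the necessary conditions of Theorem~2: the elementary row combinations $(k-2)(\cdot)^{1}-(k-1)(\cdot)^{2}+(\cdot)^{k}$ across time steps, which annihilate the contributions of the starting time offsets and clock drifts (the combinations already visible in $\bar{\mathbf{T}}$ of (\ref{eq:T-BAR}) and in $\bar{\mathbf{L}}_{i}$ of (\ref{eq: L-BAR})), followed by a permutation of columns grouping them into the per--array blocks $\bar{\mathbf{L}}_{2},\dots,\bar{\mathbf{L}}_{N}$ and the source--trajectory block. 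These operations are invertible, so $rank(\mathbf{F})=rank(\overline{\mathbf{F}}^{\prime})$ and the task reduces to showing that $\overline{\mathbf{F}}^{\prime}$ has full column rank.

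For this I would run a kernel argument that exploits the decoupling in the measurement model: the block $[T_{i1}^{k};\mathbf{d}_{i1}^{k}]$ depends only on $\mathbf{x}_{arr\_i}$ and on $\mathbf{s}^{k}$, so in $\overline{\mathbf{F}}^{\prime}$ every row belonging to array $i$ has nonzero entries only in the $\bar{\mathbf{L}}_{i}$--columns and in the source columns, while a residual block of rows (built from $\bar{\mathbf{T}}$) involves the source columns alone. Write a putative null vector as $\mathbf{u}=[\mathbf{u}_{arr\_2};\dots;\mathbf{u}_{arr\_N};\mathbf{u}_{s}]$ and fix the index $j$ of the hypotheses. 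For $i\neq j$ the array--$i$ rows of $\overline{\mathbf{F}}^{\prime}\mathbf{u}=\mathbf{0}$ read $\bar{\mathbf{L}}_{i}\mathbf{u}_{arr\_i}=-(\text{source part})\,\mathbf{u}_{s}$, and since $\bar{\mathbf{L}}_{i}$ is injective by (ii) this lets me eliminate $\mathbf{u}_{arr\_i}$ (a partial Schur complement against the block diagonal matrix with diagonal blocks $\bar{\mathbf{L}}_{i}$, $i\neq j$). What remains is a homogeneous system in $(\mathbf{u}_{arr\_j},\mathbf{u}_{s})$ whose coefficient matrix is the one formed by the $(j-1)$-th column block and the last column block of $\overline{\mathbf{F}}^{\prime}$; by (i) it has full column rank, so $\mathbf{u}_{arr\_j}=\mathbf{0}$ and $\mathbf{u}_{s}=\mathbf{0}$. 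Substituting $\mathbf{u}_{s}=\mathbf{0}$ back leaves $\bar{\mathbf{L}}_{i}\mathbf{u}_{arr\_i}=\mathbf{0}$ for every $i\neq j$, and (ii) again gives $\mathbf{u}_{arr\_i}=\mathbf{0}$. Hence $\mathbf{u}=\mathbf{0}$, $\overline{\mathbf{F}}^{\prime}$ has full column rank, and by Theorem~1 so does $\mathbf{J}$.

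The routine part is the first step, i.e.\ checking that the stated row and column operations are invertible and genuinely bring $\mathbf{F}$ to the displayed form of $\overline{\mathbf{F}}^{\prime}$; this follows the pattern of \cite{Kong2021} and of the proof of Theorem~2. The delicate point --- and what I expect to be the main obstacle --- is reconciling the \emph{shared} source columns with the fact that (i) concerns only a two--block submatrix: one has to verify that, after folding out the arrays $i\neq j$ via the injectivity in (ii), the homogeneous system left in $(\mathbf{u}_{arr\_j},\mathbf{u}_{s})$ has its solvability governed \emph{exactly} by the full column rank of the submatrix named in (i), i.e.\ that (\ref{eq:F bar prime}) has been written so that this submatrix already absorbs the effect of the elimination. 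I would make the elimination explicit by left--multiplying each array--$i$ row block ($i\neq j$) by $\mathbf{I}-\bar{\mathbf{L}}_{i}\bar{\mathbf{L}}_{i}^{+}$ and then identifying the reduced coefficient matrix with the corresponding submatrix of (\ref{eq:F bar prime}); pinning down that identification, and checking that the hypotheses are stated for the correct choice(s) of $j$, is where the real effort lies.
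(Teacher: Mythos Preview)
Your reduction to $\overline{\mathbf{F}}^{\prime}$ via Theorem~1 and the transformations behind Theorem~2 matches the paper. The gap is in how you decouple the shared source block. In your kernel argument you assert that, after eliminating $\mathbf{u}_{arr\_i}$ for $i\neq j$, ``what remains is a homogeneous system in $(\mathbf{u}_{arr\_j},\mathbf{u}_{s})$ whose coefficient matrix is the one formed by the $(j-1)$-th column block and the last column block of $\overline{\mathbf{F}}^{\prime}$''. That is not what the elimination actually yields: row block $i$ of $\overline{\mathbf{F}}^{\prime}\mathbf{u}=\mathbf{0}$ reads $\bar{\mathbf{L}}_i\mathbf{u}_{arr\_i}=-\bar{\mathbf{T}}\mathbf{u}_s$, so after solving for $\mathbf{u}_{arr\_i}$ the residual constraint on $\mathbf{u}_s$ is only $\bar{\mathbf{T}}\mathbf{u}_s\in\mathrm{range}(\bar{\mathbf{L}}_i)$, \emph{not} $\bar{\mathbf{T}}\mathbf{u}_s=\mathbf{0}$, and the reduced system does not coincide with the submatrix in~(i). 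You correctly flag this as the ``delicate point'', but your proposed fix---left-multiplying row block $i$ by $\mathbf{I}-\bar{\mathbf{L}}_i\bar{\mathbf{L}}_i^{+}$---does not recover that submatrix either: $(\mathbf{I}-\bar{\mathbf{L}}_i\bar{\mathbf{L}}_i^{+})\bar{\mathbf{T}}\neq\bar{\mathbf{T}}$ in general, since $\bar{\mathbf{L}}_i$ and $\bar{\mathbf{T}}$ overlap nontrivially in rows $3,\dots,K$ of (\ref{eq: L-BAR})--(\ref{eq:T-BAR}).

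The paper avoids the projection altogether by exploiting the one structural feature you did not use: the last column block of $\overline{\mathbf{F}}^{\prime}$ is $[\bar{\mathbf{T}};\bar{\mathbf{T}};\dots;\bar{\mathbf{T}}]$ with \emph{identical} copies of $\bar{\mathbf{T}}$ in every row block. Subtracting the row block for array $j$ from each of the others therefore cancels their $\bar{\mathbf{T}}$'s \emph{exactly} (at the cost of a $-\bar{\mathbf{L}}_j$ appearing in the $(j-1)$-th column of those rows), producing the form (\ref{eq:L_sufficient}). In that form the source column is supported only in the single row block for array $j$, and the paper reads off full column rank from the two pieces: the $(j-1)$-th and last column blocks (the matrix $\mathbf{M}_{2\_T}$ when $j=2$, equivalent in rank to the submatrix named in~(i)) and the block-diagonal $diag(\bar{\mathbf{L}}_i:i\neq j)$ handled by~(ii). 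This one row-block subtraction is the device that dissolves your ``delicate point'' without any pseudo-inverse or Schur complement.
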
 
\subsection{Special Cases When Observability is Impossible}
Next, we state some exceptional cases when observability is impossible.

\begin{theorem}The matrix $\mathbf{\bar{T}}$ is not of full column rank if one or more of the following conditions hold.

(i) For all microphone arrays, there exists at least five time steps
information (for this to hold, we must have $K\geq5$ in (\ref{eq:T-BAR})), i.e., when $K<5,$ the Jacobian matrix $\mathbf{J}$ is not of full
column rank.

(ii) The coordinates of the sound source at all moments are collinear (together with the origin)
in $\left\{ \mathrm{\mathbf{x}}_{arr\_1}\right\} $, i.e., $\mathbf{\mathbf{s}}^{k}=\mathbf{\mathbf{{\scriptstyle \lambda}s}}^{k-1}$
does always hold, where ${\scriptstyle {\textstyle \lambda}}$ is an arbitrary
real number.

(iii) The sound source keeps moving at any plane of $x=\alpha y$,
$x=\beta z$, and $y=\gamma z$ w.r.t. $\left\{ \mathrm{\mathbf{x}}_{arr\_1}\right\} $
in all moments, where $\alpha,\beta,\gamma$ are arbitrary real
numbers.
\end{theorem}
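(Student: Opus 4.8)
The plan is to reduce the entire statement to a single rank count on the three-column matrix $\mathbf{\bar{T}}$ of (\ref{eq:T-BAR}). By Theorem~2, $\mathbf{J}$ can have full column rank only if $\mathbf{\bar{T}}$ does; so it suffices to exhibit, under each of (i)--(iii), a configuration in which $\mathrm{rank}(\mathbf{\bar{T}}) < 3$. The first step I would take is to read off the structure of $\mathbf{\bar{T}}$ from (\ref{eq:T-BAR}): beyond the two leading zero rows and the $3K$ trailing zero rows it has exactly $K-2$ possibly-nonzero rows, the one indexed by $k$ (for $k=3,\dots,K$) being the $1\times 3$ vector $\mathbf{r}_k = -(k-2)(\mathbf{s}^1)^{\mathrm T}/(cd_1^1) + (k-1)(\mathbf{s}^2)^{\mathrm T}/(cd_1^2) - (\mathbf{s}^k)^{\mathrm T}/(cd_1^k)$. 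Since the reference array sits at the origin, $d_1^k = \|\mathbf{s}^k\|$, so $\mathbf{s}^k/d_1^k = \mathbf{u}_k$ is the unit vector pointing from $\{\mathbf{x}_{arr\_1}\}$ toward the source at time $k$, and $\mathbf{r}_k = \tfrac{1}{c}\bigl(-(k-2)\mathbf{u}_1 + (k-1)\mathbf{u}_2 - \mathbf{u}_k\bigr)^{\mathrm T}$. Hence $\mathbf{\bar{T}}$ has full column rank if and only if $\mathrm{span}\{\mathbf{r}_3,\dots,\mathbf{r}_K\} = \mathbb{R}^{1\times 3}$.

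With this in hand the three cases are immediate. For (i): if $K<5$ there are at most $K-2\le 2$ nonzero rows, so $\mathrm{rank}(\mathbf{\bar{T}})\le 2<3$; equivalently, full column rank of $\mathbf{\bar{T}}$ requires at least five time steps, and Theorem~2 then forces $\mathbf{J}$ to be rank-deficient whenever $K<5$. For (ii): $\mathbf{s}^k=\lambda\,\mathbf{s}^{k-1}$ holding for every $k$ (with $\lambda$ possibly depending on $k$) forces each $\mathbf{s}^k$ to be a scalar multiple of $\mathbf{s}^1$, so every $\mathbf{u}_k\in\mathrm{span}\{\mathbf{u}_1\}$; then every $\mathbf{r}_k$ lies in the one-dimensional space $\mathrm{span}\{\mathbf{u}_1^{\mathrm T}\}$, whence $\mathrm{rank}(\mathbf{\bar{T}})\le 1$. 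For (iii): each of the loci $x=\alpha y$, $x=\beta z$, $y=\gamma z$ is a two-dimensional linear subspace $\mathcal{P}\subset\mathbb{R}^3$ through the origin, so if all $\mathbf{s}^k\in\mathcal{P}$ then all $\mathbf{u}_k\in\mathcal{P}$, and hence each $\mathbf{r}_k^{\mathrm T}$, being a linear combination of $\mathbf{u}_1,\mathbf{u}_2,\mathbf{u}_k$, also lies in $\mathcal{P}$; therefore $\mathrm{rank}(\mathbf{\bar{T}})\le\dim\mathcal{P}=2<3$. In every case Theorem~2 yields that $\mathbf{J}$ is not of full column rank.

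I expect no genuine obstacle here. The only points needing care are (a) extracting the exact form and the exact count $K-2$ of the nonzero rows of $\mathbf{\bar{T}}$ from (\ref{eq:T-BAR}) --- this is precisely what produces the ``$K\ge 5$'' threshold in (i) --- and (b) quietly discarding the degenerate configuration $\mathbf{s}^k=\mathbf{0}$ (source coincident with the reference array), for which $d_1^k=0$ and the DOA/TDOA model is ill-posed to begin with. The one conceptual idea worth stating explicitly is that ``collinear with the origin'' and ``coplanar with the origin'' confine all the direction vectors $\mathbf{u}_k$, and therefore all rows $\mathbf{r}_k$, to a subspace of dimension $1$ and $2$ respectively --- both strictly below the $3$ required for $\mathbf{\bar{T}}$ to be of full column rank.
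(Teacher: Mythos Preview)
Your proposal is correct and follows essentially the same approach as the paper: count the nonzero rows of $\mathbf{\bar{T}}$ for (i), and observe that all rows lie in a proper subspace for (ii) and (iii). Your presentation is in fact slightly cleaner than the paper's, since you unify (ii) and (iii) through the direction vectors $\mathbf{u}_k=\mathbf{s}^k/d_1^k$ and a single subspace-dimension argument, whereas the paper treats each case more ad hoc (and in (iii) merely asserts rank deficiency after writing out the coordinates).
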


\begin{theorem}The matrix $\mathbf{\mathbf{\bar{L}}_{i}}$, $i=2,3,\cdots,N$, are not of full column rank if one or more of the following conditions hold:

(i) The coordinates of the sound source at all moments
are proportional w.r.t. $\left\{ \mathrm{\mathbf{x}}_{arr\_i}\right\} $,
i.e., $(\mathbf{\mathbf{s}}^{k}-\mathbf{x}_{arr\_i}^{p})=\lambda(\mathbf{\mathbf{s}}^{k-1}-\mathbf{x}_{arr\_i}^{p})$
does always hold, where ${\scriptstyle {\textstyle \lambda}}$ is an
arbitrary real number;

(ii) For the $i\raisebox{0mm}{-}th$ microphone array, one of the Euler
angles satisfies $\theta_{arr\_i}^{y}=\frac{\pi}{2}$. 

\end{theorem}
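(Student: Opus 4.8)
The plan is to prove each part by producing an explicit nonzero null vector of $\bar{\mathbf{L}}_{i}$, read off from the closed forms of $\mathbf{h}_{arr\_i}^{k}$, $\mathbf{U}_{arr\_i}^{k}$, $\mathbf{V}_{arr\_i}^{k}$ in (\ref{eq:Block of neccessary matrix}). These blocks are chain-rule derivatives of the measurement models: $\mathbf{h}_{arr\_i}^{k}$ is proportional to the bearing row $(\mathbf{s}^{k}-\mathbf{x}_{arr\_i}^{p})^{\mathrm{T}}/(cd_{i}^{k})$ obtained by differentiating the TDOA (\ref{eq:TDOA}) in $\mathbf{x}_{arr\_i}^{p}$; $\mathbf{U}_{arr\_i}^{k}$ is proportional, with scalar factor $1/d_{i}^{k}$, to $\mathbf{R}_{i}^{\mathrm{T}}(\mathbf{I}_{3}-\mathbf{u}_{i}^{k}(\mathbf{u}_{i}^{k})^{\mathrm{T}})$ with $\mathbf{u}_{i}^{k}:=(\mathbf{s}^{k}-\mathbf{x}_{arr\_i}^{p})/d_{i}^{k}$, obtained by differentiating the DOA (\ref{expression_DOA}) in $\mathbf{x}_{arr\_i}^{p}$; and $\mathbf{V}_{arr\_i}^{k}$ has columns $\frac{\partial\mathbf{R}_{i}^{\mathrm{T}}}{\partial\theta_{arr\_i}^{x}}\mathbf{u}_{i}^{k}$, $\frac{\partial\mathbf{R}_{i}^{\mathrm{T}}}{\partial\theta_{arr\_i}^{y}}\mathbf{u}_{i}^{k}$, $\frac{\partial\mathbf{R}_{i}^{\mathrm{T}}}{\partial\theta_{arr\_i}^{z}}\mathbf{u}_{i}^{k}$, obtained by differentiating the DOA in the Euler angles. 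In the layout (\ref{eq: L-BAR}) the third column block (the one carrying $\mathbf{x}_{arr\_i}^{p}$) is the only block containing $\mathbf{h}_{arr\_i}^{k}$ and $\mathbf{U}_{arr\_i}^{k}$, and the last column block (the one carrying $\mathbf{x}_{arr\_i}^{\theta}$) is the only block containing $\mathbf{V}_{arr\_i}^{k}$; this separation is what makes the null vectors below straightforward to certify, and, by Theorem 2, a rank deficiency of $\bar{\mathbf{L}}_{i}$ in turn forces $\mathbf{J}$ to be rank deficient.

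For part (i), iterating the hypothesis $(\mathbf{s}^{k}-\mathbf{x}_{arr\_i}^{p})=\lambda(\mathbf{s}^{k-1}-\mathbf{x}_{arr\_i}^{p})$ from $k=1$ shows that all vectors $\mathbf{s}^{k}-\mathbf{x}_{arr\_i}^{p}$ lie on one line through $\mathbf{x}_{arr\_i}^{p}$; ruling out the non-physical case in which the source passes through the array ($d_{i}^{k}\to 0$), the unit bearing $\mathbf{u}_{i}^{k}$ equals a single vector $\mathbf{u}_{i}$ for all $k$. I would then take the candidate vector that places $\mathbf{u}_{i}$ in the $\mathbf{x}_{arr\_i}^{p}$-block of $\bar{\mathbf{L}}_{i}$ and zeros elsewhere and check it against each row type: the first two rows annihilate it because they are zero in that block; every second-difference row $j\mathbf{h}_{arr\_i}^{1}-(j+1)\mathbf{h}_{arr\_i}^{2}+\mathbf{h}_{arr\_i}^{j+2}$ is identically $\mathbf{0}$, since the $\mathbf{h}$'s are now $k$-independent and $j-(j+1)+1=0$; and each DOA row contributes $\mathbf{U}_{arr\_i}^{k}\mathbf{u}_{i}\propto\mathbf{R}_{i}^{\mathrm{T}}(\mathbf{I}_{3}-\mathbf{u}_{i}\mathbf{u}_{i}^{\mathrm{T}})\mathbf{u}_{i}=\mathbf{0}$ because $\|\mathbf{u}_{i}\|=1$. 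Hence $\bar{\mathbf{L}}_{i}$ is rank deficient, with null direction ``translate the $i$-th array along the line to the source, absorbing the induced range change into its time offset'', which leaves every TDOA and DOA measurement unchanged.

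For part (ii), the key fact is that $\theta_{arr\_i}^{y}=\frac{\pi}{2}$ is the gimbal-lock configuration of the Euler parametrization used here (consistent with $\theta_{arr\_i}^{y}$ having range $[0,\pi]$). A direct computation using the elementary-rotation identity valid for this convention gives $\partial\mathbf{R}_{i}/\partial\theta_{arr\_i}^{x}=\pm\,\partial\mathbf{R}_{i}/\partial\theta_{arr\_i}^{z}$ at $\theta_{arr\_i}^{y}=\frac{\pi}{2}$, and hence $\frac{\partial\mathbf{R}_{i}^{\mathrm{T}}}{\partial\theta_{arr\_i}^{x}}\mathbf{u}_{i}^{k}=\pm\,\frac{\partial\mathbf{R}_{i}^{\mathrm{T}}}{\partial\theta_{arr\_i}^{z}}\mathbf{u}_{i}^{k}$ for every $k$ with the same sign; that is, the $\theta_{arr\_i}^{x}$- and $\theta_{arr\_i}^{z}$-columns of every $\mathbf{V}_{arr\_i}^{k}$ are proportional. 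Since the Euler angles enter $\bar{\mathbf{L}}_{i}$ only through the $\mathbf{V}_{arr\_i}^{k}$ blocks, the $\theta_{arr\_i}^{x}$- and $\theta_{arr\_i}^{z}$-columns of $\bar{\mathbf{L}}_{i}$ are proportional, so $\bar{\mathbf{L}}_{i}$ is not of full column rank.

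The points needing care are: confirming the explicit forms of $\mathbf{h}_{arr\_i}^{k}$, $\mathbf{U}_{arr\_i}^{k}$, $\mathbf{V}_{arr\_i}^{k}$ consistently with the derivation of (\ref{eq: L-BAR}) --- in particular, verifying that no row outside the identified blocks has a nonzero entry in the $\mathbf{x}_{arr\_i}^{p}$- or Euler-angle columns, so that nothing can restore the rank lost along the exhibited directions --- and discharging two sign ambiguities, namely the sign of $\mathbf{u}_{i}^{k}$ relative to $\mathbf{u}_{i}$ in part (i) (handled by excluding the degenerate passage through the array) and the $\pm$ in the gimbal-lock identity in part (ii) (harmless, since only proportionality of two columns is required). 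Everything else is routine linear algebra on the block structure of $\bar{\mathbf{L}}_{i}$.
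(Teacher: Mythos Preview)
Your proposal is correct and follows essentially the same approach as the paper: in part (i) both arguments exploit that all $\mathbf{U}_{arr\_i}^{k}$ share the common null direction $\mathbf{u}_{i}=(\mathbf{s}^{k}-\mathbf{x}_{arr\_i}^{p})/d_{i}^{k}$ while the $\mathbf{h}$-difference rows vanish, and in part (ii) both arguments rest on the fact that at $\theta_{arr\_i}^{y}=\pi/2$ the first and third columns of every $\mathbf{V}_{arr\_i}^{k}$ are proportional with a $k$-independent sign. The main stylistic difference is that you package the reasoning as an explicit null vector and invoke the gimbal-lock identity directly, whereas the paper argues via $rank(\mathbf{A})=2$ and an explicit coordinate computation of $\mathbf{V}_{i}^{k}$ at $\theta_{y}=\pi/2$; your caveat about excluding $\lambda<0$ is also implicit in the paper's derivation (its claim $\mathbf{h}_{i}^{k}=\mathbf{h}_{i}^{k-1}$ likewise requires the bearing not to flip sign).
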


\begin{figure}[tb]
\centering \subfigure[]{\includegraphics[width=0.75\columnwidth]{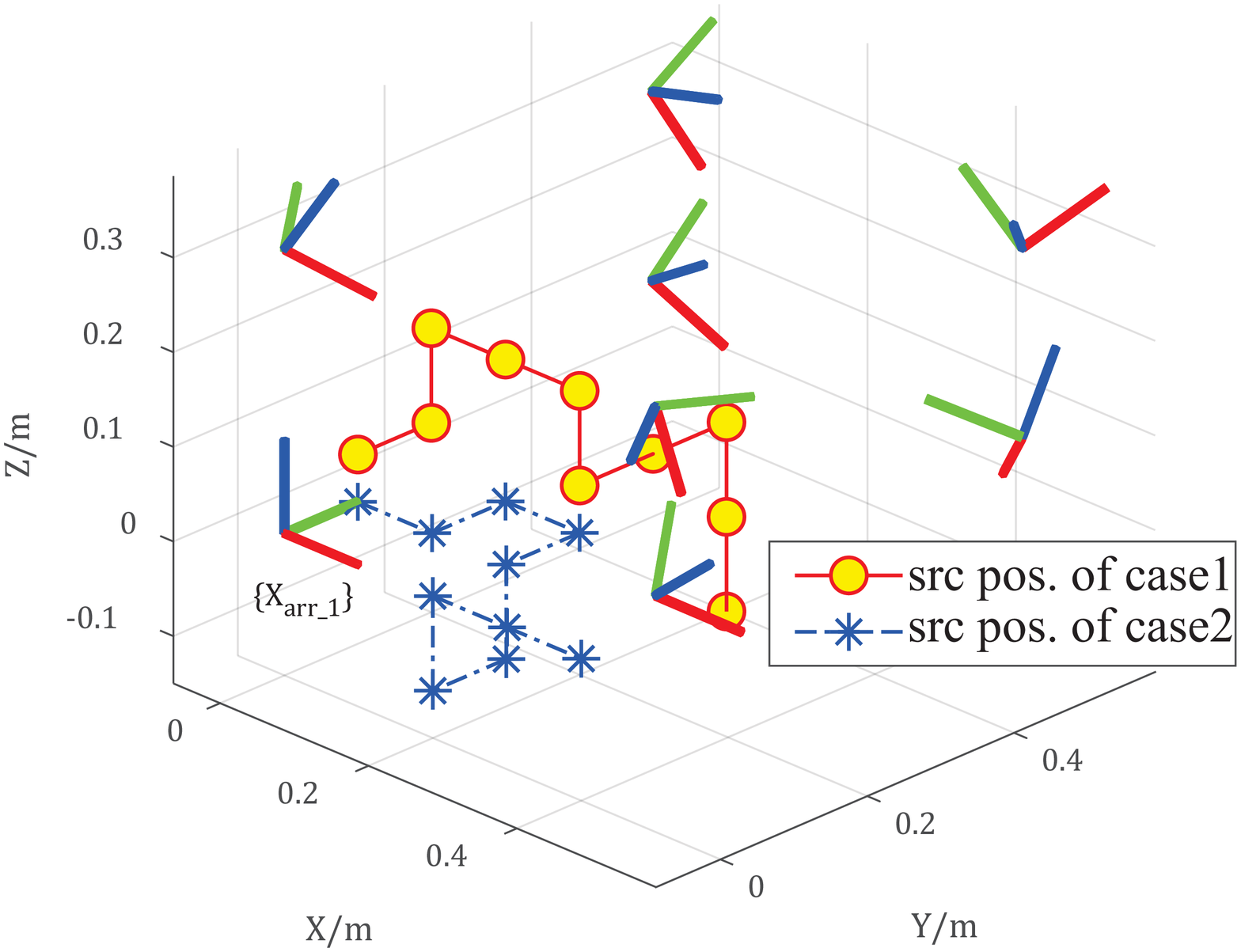}}\\
\subfigure[]{\includegraphics[width=0.75\columnwidth,height=0.5\linewidth]{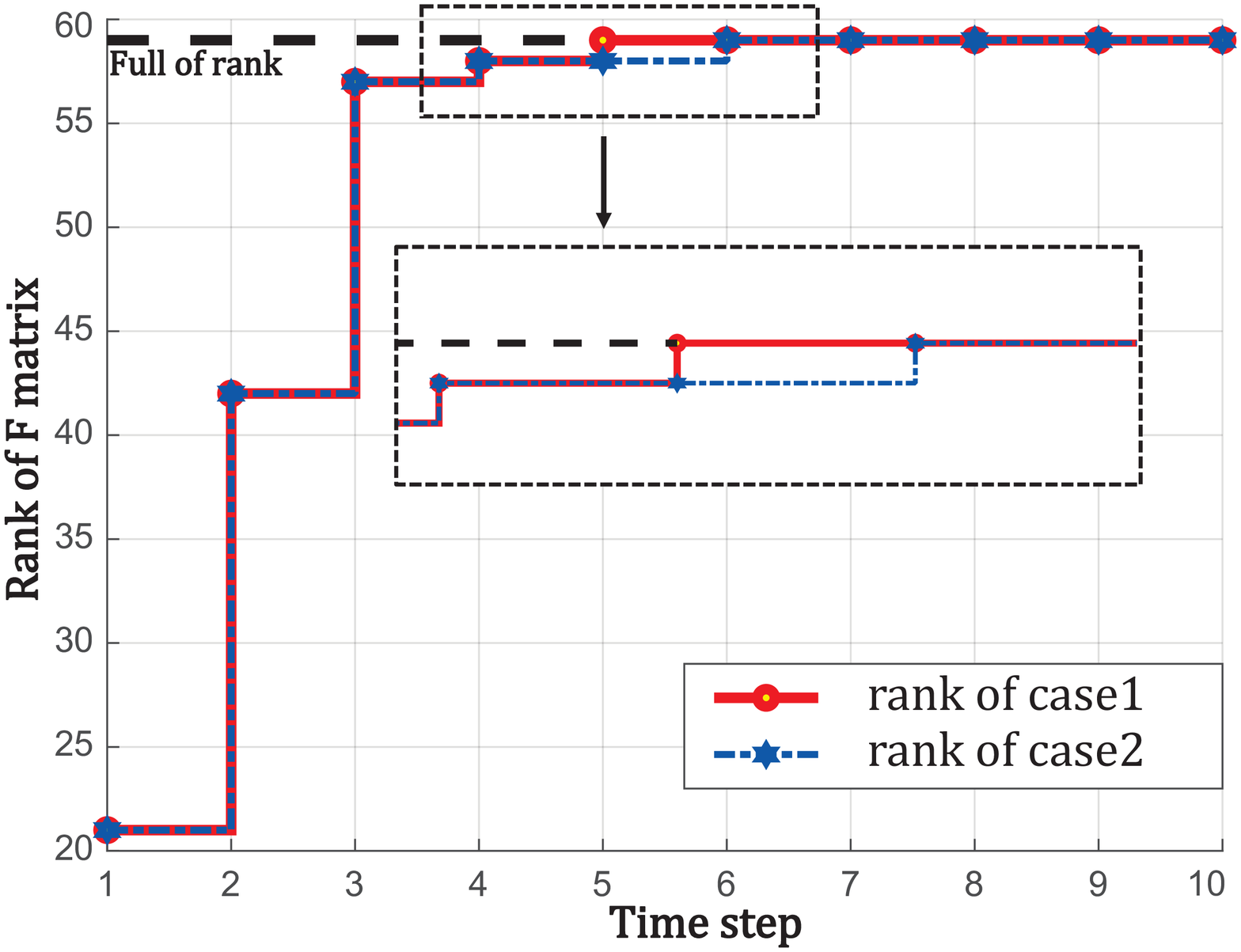}}
\caption{Two observable cases and the variation of the $\mathbf{F}$ matrix's
rank. (a) Geometric relationship between the source and the microphone
arrays during the movement. (b) Variation of the $\mathbf{F}$ matrix rank
with the movement of the source.}
\label{fig_lemma_corrolary-2}
\end{figure}

\begin{figure}[tbh]
\centering \subfigure[]{\includegraphics[width=0.75\columnwidth]{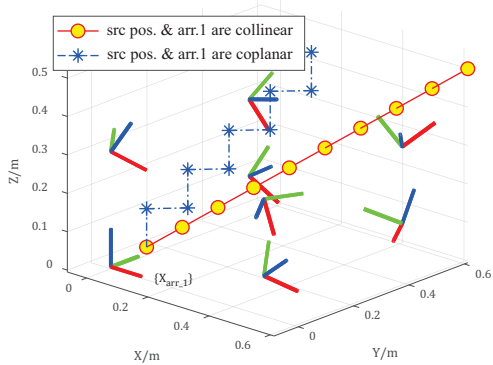}}\\
\subfigure[]{\includegraphics[width=0.75\columnwidth,height=0.5\linewidth]{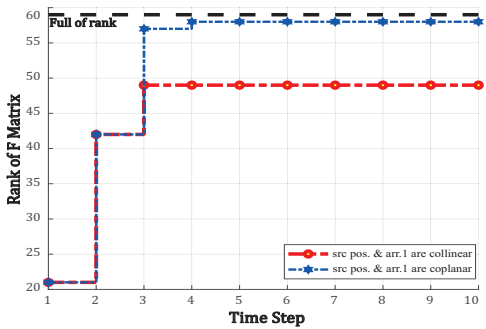}}
\caption{The sound source remains co-linear or co-planar with $\left\{ \mathrm{\mathbf{x}}_{arr\_1}\right\}$
during the movement. (a) Geometric relationship between the source
and the microphone arrays during the movement. (b) Variation of the
$\mathbf{F}$ matrix rank with the movement of the source.}
\label{fig_lemma_corrolary}
\end{figure}

\begin{figure}[tbh]
\centering \subfigure[]{\includegraphics[width=0.75\columnwidth]{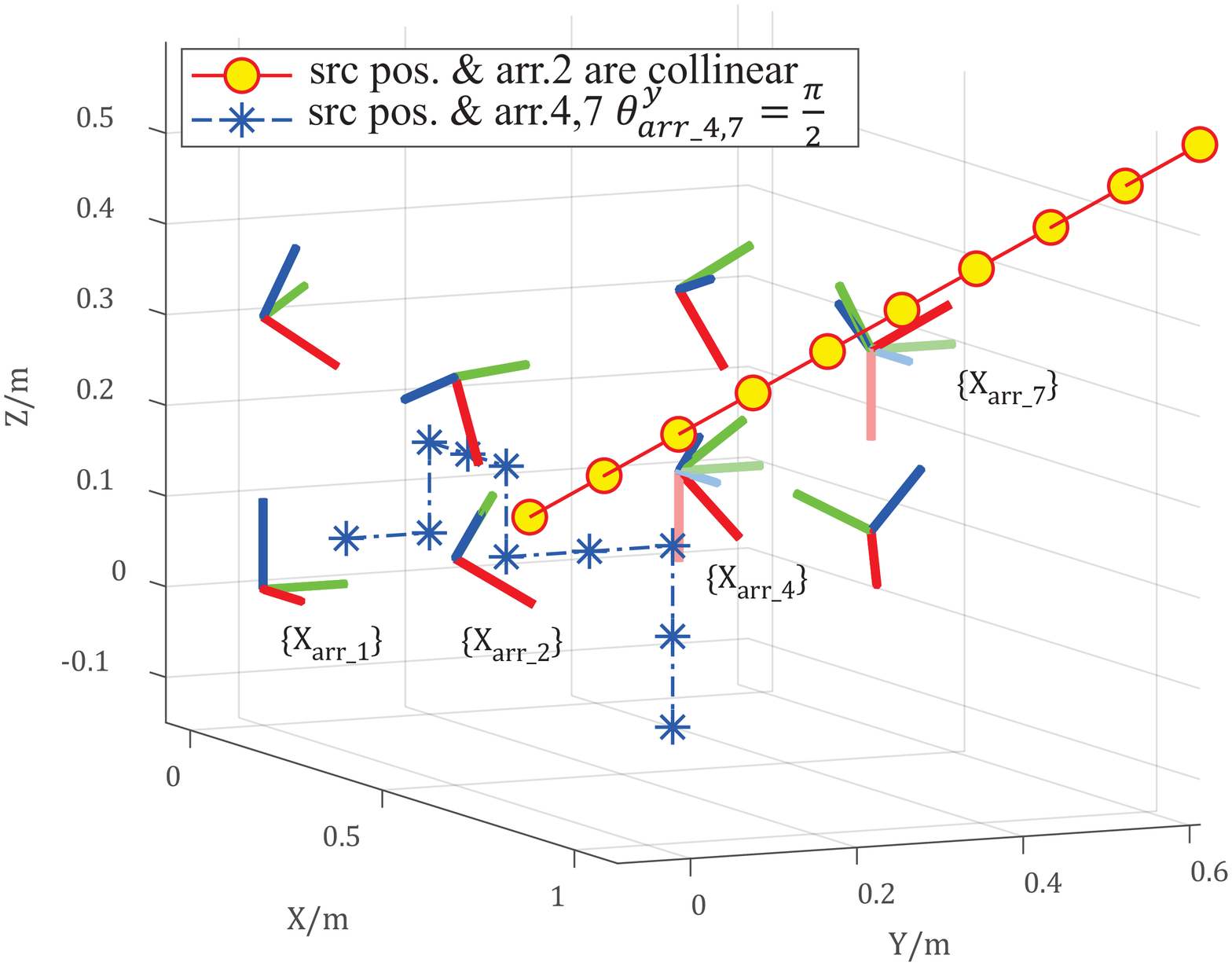}}\\
\subfigure[]{\includegraphics[width=0.7\columnwidth,height=0.5\linewidth]{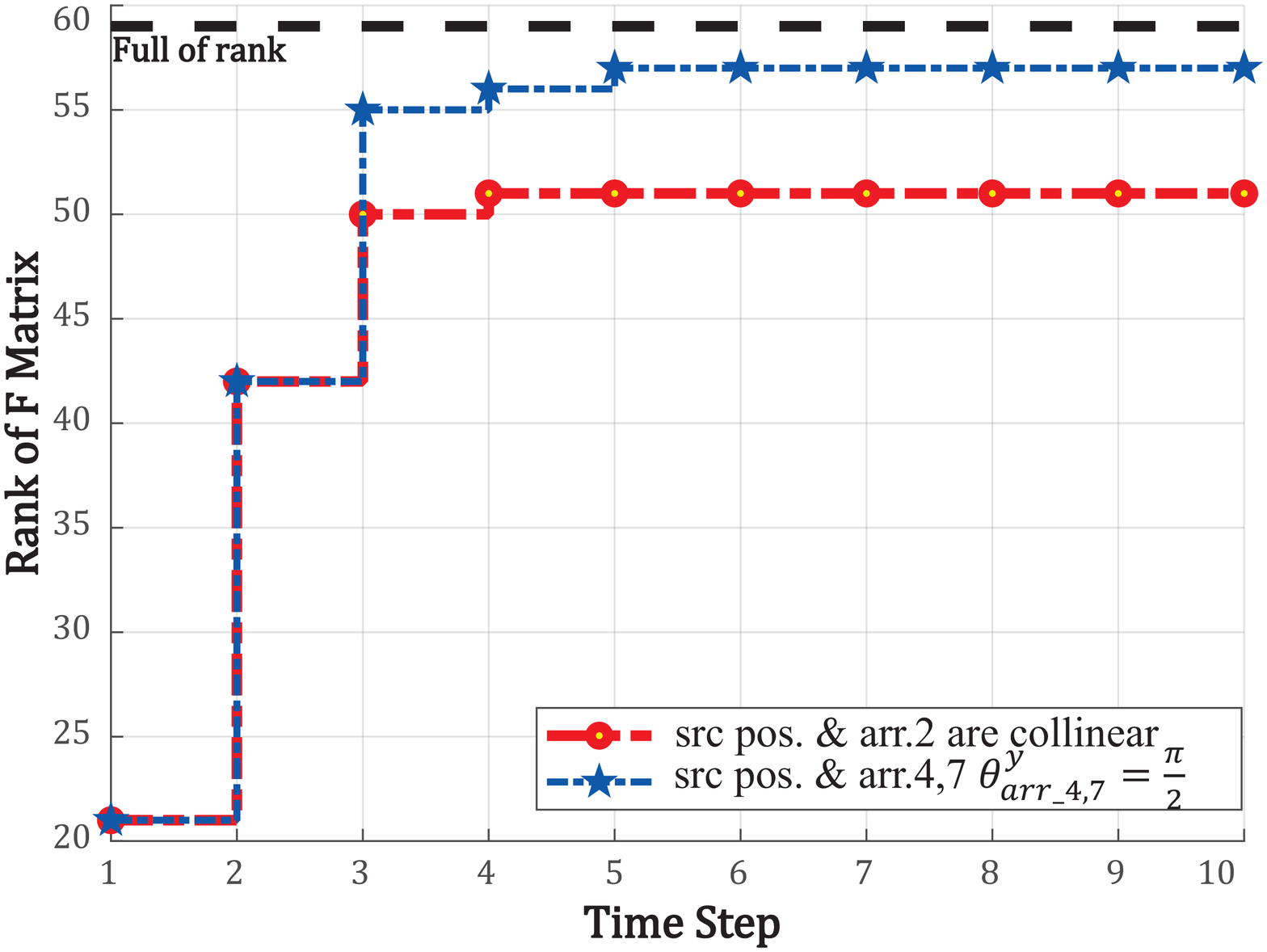}}
\caption{The sound source remains co-linear with $\left\{ \mathrm{\mathbf{x}}_{arr\_2}\right\} $
or $\theta_{arr\_4,7}^{y}=\pi/2$ during the movement. (a) Geometric
relationship between the source and the microphone arrays during the
movement. (b) Variation of the $\mathbf{F}$ matrix rank with the movement
of the source. }
\label{fig_lemma_corrolary-1}
\end{figure}

\section{\label{simu}Numerical Simulations and Results}

We next use numerical simulations to illustrate the theoretical findings obtained above. The whole experimental scheme is shown in Fig. \ref{TDOA}, where the multiple microphone arrays remain static while the sound source moves around in the environment. To generate the data, we assume that the characteristic parameters of each microphone array and sound source positions are known. All TDOA and DOA measurements are corrupted by Gaussian noise. Here we consider the case with eight microphone arrays.

In the simulation process, we set  $\left\{ \mathrm{\mathbf{x}}_{arr\_1}\right\} $ as
the global reference coordinate system. 
The sound source always moves at a speed of 0.1m/s and emits acoustic signal once per second. The starting time offset of each microphone array is randomly generated in 0$\sim$0.1s, and the clock drift constant is randomly generated in 0$\sim$0.1ms to restore the real scene as much as possible.

\subsection{Observable Cases}

We firstly give two observable scenarios for which the motion trajectories of the
sound source in 3D space are shown in Fig. \ref{fig_lemma_corrolary-2}(a). The variation of $\mathbf{F}$ matrix rank with time steps is shown in
Fig. \ref{fig_lemma_corrolary-2}(b). It can be seen that as time steps increase and the sound source moves along the two trajectories, the $\mathbf{F}$ matrix gradually becomes full column rank which indicates that the Jacobian matrix also gradually becomes full column rank. Based on Theorem 3, since $rank(\mathbf{M}_{2\_T})=11$
and $rank(diag(\mathbf{\bar{L}}_{i}))=48$, $i=3,4,\cdots,8$, the Jacobian matrix is of full column rank. At the moment when the Jacobian matrix becomes full column rank, it can be verified that $rank(diag(\mathbf{\bar{L}}_{i}))=56$, $i=2,3,\cdots,8$, and $rank(\mathbf{\bar{T}})=3$. 
Hence, the simulations presented so far based on the theorems worked properly as expected. It is worth noting that the sound source positions are not always in the same plane or same line. Therefore, the Jacobian matrix is of full column rank in general.

\subsection{Unobservable Cases}

Several unobservable scenarios are presented in the following to verify the conclusions in Theorems 4-5.

(i) For the Jacobian matrix to have full column rank, it is necessary that the time steps are greater than or equal to 3 so that the number
of rows of the Jacobian matrix is greater than the number of
columns. As can be seen from Fig. \ref{fig_lemma_corrolary-2}(b),
when the number of time steps is greater than or equal to 3 but less
than 5, the Jacobian matrix is not of full column rank.
This reflects that the system is unobservable when
the number of time steps is less than 5.

(ii) For the trajectories of the sound source shown in Fig. \ref{fig_lemma_corrolary}(a),
the first case is that the sound source stays co-linear with $\left\{ \mathrm{\mathbf{x}}_{arr\_1}\right\} $
during the moving process, and the second case is that the sound source
remains co-planar with $\left\{ \mathrm{\mathbf{x}}_{arr\_1}\right\} $.
From Fig. \ref{fig_lemma_corrolary}(b), it can be seen that both are permanently unobservable due to the lack of information.

(iii) For the sound source trajectories shown in Fig. \ref{fig_lemma_corrolary-1}(a),
the first case is that the sound source keeps co-linear with the origin of $\left\{ \mathrm{\mathbf{x}}_{arr\_2}\right\} $
during the movement. In the second case, the Euler angles $\theta_{arr\_4}^y$ and $\theta_{arr\_7}^y$ of $\left\{ \mathrm{\mathbf{x}}_{arr\_4}\right\} $ and $\left\{ \mathrm{\mathbf{x}}_{arr\_7}\right\} $ are $\frac{\pi}{2}$,
and the sound source travels along the route of the observable
scenario mentioned in case 1 of Fig. \ref{fig_lemma_corrolary-2}(a). The rotation angle is at the singular point of observation, rendering the system unobservable. Hence, the simulations presented above validate the conclusions in Theorems 4-5.

\section{\label{CONCLUSION}CONCLUSION}
This paper is concerned with the observability analysis of graph SLAM-based joint calibration of
multiple microphone arrays and sound source localization. Via a FIM approach, we thoroughly investigate the identifiability of the unknown parameters, including the Euler angles, geometry, asynchronous effects between the microphone arrays, and the sound source locations. We establish necessary/sufficient
conditions under which the FIM and the Jacobian matrix
have full column rank, which implies the identifiability of the unknown parameters. These conditions are closely related to the variation in the motion of the sound source and the configuration of microphone arrays, and have intuitive and physical interpretations. Based on these conditions, we also find some special cases when the Jacobian matrix does not have full column rank, and provide some geometric and physical interpretations. Extensive simulations have been conducted to  demonstrate the theoretical findings. The focus of our current and further work is to develop and validate calibration algorithms for multiple microphone arrays.

\section{Acknowledgment}

This work was supported by the Science, Technology, and Innovation Commission of Shenzhen Municipality [Grant No. ZDSYS20200811143601004].

\begin{appendix}
\begin{proof of Proposition }Firstly, we note that the relative position of the sound source satisfies
\begin{equation}\nonumber
\mathbf{s}_{\Delta}^{k-1}=\mathbf{s}^{k}-\mathbf{s}^{k-1}+\mathbf{w}^{k-1}
\end{equation}
whose corresponding Jacobian matrices are
\begin{equation}\nonumber
\dfrac{\partial\mathbf{s}_{\Delta}^{k-1}}{{\partial}\mathbf{s}^{k-1}}=-\mathbf{I}_{3},\text{ }\dfrac{\partial\mathbf{s}_{\Delta}^{k-1}}{{\partial}\mathbf{s}^{k}}=\mathbf{I}_{3}.
\end{equation}
Secondly, for $i=2,...,N$, the distance between the  $i\raisebox{0mm}{-}th$ microphone
array and the sound source at time instance $t^{k}$ can be computed
as 
\begin{equation}\nonumber
d_{i}^{k}=\sqrt{{({\Delta x}_{i}^{k})}^{2}+{({\Delta y}_{i}^{k})}^{2}+{({\Delta z}_{i}^{k})}^{2}}\label{dist}
\end{equation}
where ${\Delta x}_{i}^{k}=s_{x}^{k}-x_{arr\_i}^{x}, \text{ } {\Delta}y_{i}^{k}=s_{y}^{k}-x_{arr\_i}^{y}, \text{ } {\Delta}z_{i}^{k}=s_{z}^{k}-x_{arr\_i}^{z}$. When $i=1,$ i.e., for the first microphone array, we have 
\begin{equation}
d_{1}^{k}=\sqrt{{(s_{x}^{k})}^{2}+{(s_{y}^{k})}^{2}+{(s_{z}^{k})}^{2}}.\label{eq:s}
\end{equation}

Denote $\mathbf{L}^{k}=\dfrac{\partial\mathbf{z}^{k}}{{\partial}\mathbf{x}_{arr}}$,
i.e., $\mathbf{L}^{k}$ is the derivative of $\mathbf{z}^{k}$ (the
measurements at the time step $t^{k}$) w.r.t. $\mathbf{x}_{arr}$.
Based on the DOA and TDOA information in (\ref{expression_DOA})--(\ref{eq:TDOA}), we then have: 
\begin{equation}
\mathbf{L}^{k}=\dfrac{\partial\mathbf{z}^{k}}{{\partial}\mathbf{x}_{arr}}=\left[\begin{array}{ccc}
\mathbf{J}_{arr\_2}^{k}, \cdots, \mathbf{J}_{arr\_N}^{k}\end{array}\right]\in\mathbb{R}^{4\widetilde{N}\times8\widetilde{N}}\label{eq:L}
\end{equation}
where for $i=2,...,N,$ and $k=1,\ldots,K$, and only entries of $\mathbf{J}_{arr\_i}^{k}$ on its $(4i-7:4i-4)$
rows are nonzero. Denote{\small{} }$\mathbf{h}_{i}^{k},\mathbf{U}_{i}^{k}$
as the partial derivative of TDOA and DOA w.r.t. microphone array
position, respectively; denote $\mathbf{V}_{i}^{k}$ as the partial
derivative of DOA w.r.t. $X,Y,Z$ Euler angles. We then have: 
\begin{equation}
\begin{array}{c}
\mathbf{H}_{arr\_i}^{k}\triangleq\mathbf{J}_{arr\_i}^{k}(4i-7:4i-4,:)\\
=\left[\begin{array}{cccc}
\mathbf{h}_{i}^{k} & \mathbf{0}_{1\times3} & 1 & {k\Delta}_{t}\\
\mathbf{U}_{i}^{k} & \mathbf{V}_{i}^{k} & \mathbf{0}_{3\times1} & \mathbf{0}_{3\times1}
\end{array}\right]\in\mathbf{\mathbb{R}}^{4\times8}
\end{array}\label{eq:Block of neccessary matrix}
\end{equation}
where
\begin{equation}\nonumber
\mathbf{h}_{i}^{k}=\text{\ensuremath{\left[\dfrac{{\scriptstyle {\displaystyle -{\Delta x}_{i}^{k}}}}{cd_{i}^{k}},\dfrac{{\scriptstyle {\displaystyle -{\Delta y}_{i}^{k}}}}{cd_{i}^{k}},\dfrac{{\scriptstyle {\displaystyle -{\Delta z}_{i}^{k}}}}{cd_{i}^{k}}\right]}},
\end{equation}
\begin{equation}
\begin{array}{l}
\mathbf{U}_{i}^{k}=-\mathbf{R}_{i}^{\mathrm{T}}\mathbf{A} \\
=-\mathbf{R}_{i}^{\mathrm{T}}\left[
\begin{array}{ccc}
\dfrac{{\scriptstyle(\Delta y_{i}^{k})^{2}+(\Delta z_{i}^{k})^{2}}}{{%
\scriptstyle(d_{i}^{k})^{3}}} & \dfrac{{\scriptstyle-\Delta x_{i}^{k}\Delta
y_{i}^{k}}}{{\scriptstyle(d_{i}^{k})^{3}}} & \dfrac{{\scriptstyle-\Delta
x_{i}^{k}\Delta z_{i}^{k}}}{{\scriptstyle(d_{i}^{k})^{3}}} \\
\dfrac{{\scriptstyle-\Delta x_{i}^{k}\Delta y_{i}^{k}}}{{\scriptstyle%
(d_{i}^{k})^{3}}} & \dfrac{{\scriptstyle(\Delta x_{i}^{k})^{2}+(\Delta
z_{i}^{k})^{2}}}{{\scriptstyle(d_{i}^{k})^{3}}} & \dfrac{{\scriptstyle%
-\Delta y_{i}^{k}\Delta z_{i}^{k}}}{{\scriptstyle(d_{i}^{k})^{3}}} \\
\dfrac{{\scriptstyle-\Delta x_{i}^{k}\Delta z_{i}^{k}}}{{\scriptstyle%
(d_{i}^{k})^{3}}} & \dfrac{{\scriptstyle-\Delta y_{i}^{k}\Delta z_{i}^{k}}}{{%
\scriptstyle(d_{i}^{k})^{3}}} & \dfrac{{\scriptstyle(\Delta
x_{i}^{k})^{2}+(\Delta y_{i}^{k})^{2}}}{{\scriptstyle(d_{i}^{k})^{3}}}%
\end{array}%
\right] ,%
\end{array}
\label{eq:U formation}
\end{equation}
and
\begin{equation}
\mathbf{V}_{i}^{k}={\scriptstyle \dfrac{1}{{\scriptstyle {\displaystyle d_{i}^{k}}}}}\left[\begin{array}{c}
\left[{\scriptstyle \left(\dfrac{{\scriptstyle \partial\mathbf{R}_{i\_x}^{\mathrm{T}}}}{{\scriptstyle \partial\theta_{x}}}\right)\mathbf{R}_{i\_y}^{\mathrm{T}}\mathbf{R}_{i\_z}^{\mathrm{T}}\left(\begin{array}{c}
{\Delta x}_{i}^{k}\\
{\Delta y}_{i}^{k}\\
{\Delta z}_{i}^{k}
\end{array}\right)}\right]^{\mathrm{T}}\\
\left[{\scriptstyle {\scriptstyle \mathbf{R}_{i\_x}^{\mathrm{T}}}\left(\dfrac{{\scriptstyle \partial\mathbf{R}_{i\_y}^{\mathrm{T}}}}{{\scriptstyle {\scriptstyle \partial\theta_{y}}}}\right){\scriptstyle \mathbf{R}_{i\_z}^{\mathrm{T}}}{\scriptstyle \left(\begin{array}{c}
{\Delta x}_{i}^{k}\\
{\Delta y}_{i}^{k}\\
{\Delta z}_{i}^{k}
\end{array}\right)}}\right]^{\mathrm{T}}\\
\left[{\scriptstyle \mathbf{R}_{i\_x}^{\mathrm{T}}{\scriptstyle \mathbf{R}_{i\_y}^{\mathrm{T}}\left(\dfrac{{\scriptstyle \partial\mathbf{R}_{i\_z}^{\mathrm{T}}}}{{\scriptstyle \partial\theta_{z}}}\right)}}{\scriptstyle \left(\begin{array}{c}
{\Delta x}_{i}^{k}\\
{\Delta y}_{i}^{k}\\
{\Delta z}_{i}^{k}
\end{array}\right)}\right]^{\mathrm{T}}
\end{array}\right]^{\mathrm{T}}\label{eq:V formation}
\end{equation}
where $\mathbf{R}_{i\_x}, \mathbf{R}_{i\_y}$ and $\mathbf{R}_{i\_z}$
are the rotation  matrices about coordinate frame axes $x, y$, and
$z$, respectively. The expression of $\mathbf{R}_{i}^{\mathrm{T}}$
is as follows:
\begin{equation}\nonumber
\mathbf{R}_{i}^{\mathrm{T}}=\mathbf{R}_{i\_x}^{\mathrm{T}}\mathbf{R}_{i\_y}^{\mathrm{T}}\mathbf{R}_{i\_z}^{\mathrm{T}},
\end{equation}
with
\begin{equation}\nonumber
\begin{array}{c}
\mathbf{R}_{i\_x}=\left[\begin{array}{ccc}
1 & 0 & 0\\
0 & \cos\theta_{x} & -\sin\theta_{x}\\
0 & \sin\theta_{x} & \cos\theta_{x}
\end{array}\right]\\
\mathbf{R}_{i\_y}=\left[\begin{array}{ccc}
\cos\theta_{y} & 0 & \sin\theta_{y}\\
0 & 1 & 0\\
-\sin\theta_{y} & 0 & \cos\theta_{y}
\end{array}\right]\\
\mathbf{R}_{i\_z}=\left[\begin{array}{ccc}
\cos\theta_{z} & -\sin\theta_{z} & 0\\
\sin\theta_{z} & \cos\theta_{z} & 0\\
0 & 0 & 1
\end{array}\right]
\end{array}.
\end{equation}
Denote $\mathbf{T}^{k}=\dfrac{\partial\mathbf{z}^{k}}{\partial\mathbf{s}^{k}}\in\mathbf{\mathbb{R}}^{4\widetilde{N}\times3}$
as the partial derivative of TDOA and DOA measurements w.r.t. sound
source position at time instance $t^{k}$, for $k=1,\ldots,K$. We
then have the expression of $\mathbf{T}^{k}$ as follows:
\begin{equation}
\begin{array}{c}
\mathbf{T}^{k}=\dfrac{\partial\mathbf{z}^{k}}{{\partial}\mathbf{s}^{k}}=\left[\begin{array}{ccc}
\mathbf{J}_{x}^{k} & \mathbf{J}_{y}^{k} & \mathbf{J}_{z}^{k}\end{array}\right]\\
=\left[\begin{array}{c}
-\mathbf{h}_{2}^{k}\\
\mathbf{-U}_{2}^{k}\\
\vdots\\
\mathbf{-h}_{N}^{k}\\
\mathbf{-U}_{N}^{k}
\end{array}\right]-\left[\begin{array}{c}
\left(\dfrac{\mathbf{s}^{k}}{cd_{1}^{k}}\right)^{\mathrm{T}}\\
\mathbf{0}_{3\times3}\\
\vdots\\
\left(\dfrac{\mathbf{s}^{k}}{cd_{1}^{k}}\right)^{\mathrm{T}}\\
\mathbf{0}_{3\times3}
\end{array}\right]
\end{array}.\label{eq:part TK}
\end{equation}
The results then follow the definition of the Jacobian matrix \cite[pp. 569]{Siciliano2009}.
This completes the proof. \end{proof of Proposition }

\begin{proof of Theorem 2}
First, $\mathbf{L}^{k}$ can be expressed as: 
\begin{equation}\nonumber
\mathbf{L}^{k}=diag(\mathbf{H}_{arr\_2}^{k},\mathbf{H}_{arr\_3}^{k},\cdots,\mathbf{H}_{arr\_\widetilde{N}}^{k},\mathbf{H}_{arr\_N}^{k}).
\end{equation}

\noindent By performing elementary row transformation of $\mathbf{F}$, we can obtain: 

\begin{equation}\nonumber
\begin{array}{c}
\overline{\mathbf{F}}=\left[\begin{array}{ccccc}
\mathbf{H}_{arr\_2}^{1} &  &  &  & \mathbf{T}_{arr\_2}^{1}\\
\vdots &  &  &  & \vdots\\
\mathbf{H}_{arr\_2}^{K} &  &  &  & \mathbf{T}_{arr\_2}^{K}\\
 & \mathbf{H}_{arr\_3}^{1} &  &  & \mathbf{T}_{arr\_3}^{1}\\
 & \vdots &  &  & \vdots\\
 & \mathbf{H}_{arr\_3}^{K} &  &  & \mathbf{T}_{arr\_3}^{K}\\
 &  & \ddots &  & \vdots\\
 &  &  & \mathbf{H}_{arr\_N}^{1} & \mathbf{T}_{arr\_N}^{1}\\
 &  &  & \vdots & \vdots\\
 &  &  & \mathbf{H}_{arr\_N}^{K} & \mathbf{T}_{arr\_N}^{K}
\end{array}\right]\\
=\left[\begin{array}{ccccc}
\mathbf{H}_{arr\_2} &  &  &  & \mathbf{T}_{arr\_2}\\
 & \mathbf{H}_{arr\_3} &  &  & \mathbf{T}_{arr\_3}\\
 &  & \ddots &  & \vdots\\
 &  &  & \mathbf{H}_{arr\_N} & \mathbf{T}_{arr\_N}
\end{array}\right]
\end{array}\label{eq:L T}
\end{equation}
where 
\begin{equation}\nonumber
\begin{array}{c}
\mathbf{H}_{arr\_i}=\left[\begin{array}{ccc}
\mathbf{H}_{arr\_i}^{1}; \cdots; \mathbf{H}_{arr\_i}^{K}\end{array}\right]\in\mathbf{\mathbb{R}}^{4K\times8}\\
\mathbf{T}_{arr\_i}=\left[\begin{array}{ccc}
\mathbf{T}_{arr\_i}^{1}; \cdots; \mathbf{T}_{arr\_i}^{K}\end{array}\right]\in\mathbf{\mathbb{R}}^{4K\times3}
\end{array}
\end{equation}
for $i=2,...,N$. Apparently, it holds that $rank(\mathbf{F})=rank(\overline{\mathbf{F}})$.
Also, due to the structure of $\mathbf{H}_{arr\_i}$, their columns
are independent of each other. For each microphone array, denote $\mathbf{F}_{arr\_i}=\left[\begin{array}{cc}
\mathbf{H}_{arr\_i} & \mathbf{T}_{arr\_i}\end{array}\right]$. We then perform the following elementary transformation on the matrix $\mathbf{F}_{arr\_i}$: 

(i) adding the first column block $\left[\mathbf{h}_{i}^{1};\mathbf{U}_{i}^{1};\cdots;\mathbf{h}_{i}^{K};\mathbf{U}_{i}^{K}\right]$ of $\mathbf{H}_{arr\_i}$ to $\mathbf{T}_{arr\_i}$;

(ii) exchanging row blocks to collect all $\mathbf{h}_{i}^{k}$ and
$\mathbf{U}_{i}^{k}$ together, respectively, thereby obtaining 
\begin{equation}\nonumber
\overline{\mathbf{F}}_{arr\_i}={\left[\begin{array}{cccc}
{\scriptstyle \mathbf{M}_{h\_i}} & {\scriptstyle \mathbf{0}_{K\times3}} & {\scriptstyle \mathbf{1}_{K\times1}} & {\scriptstyle \mathbf{{k}} \Delta }_{t}\\
{\scriptstyle \mathbf{M}_{U\_i}} & {\scriptstyle \mathbf{M}_{V\_i}} & {\scriptstyle \mathbf{0}_{3K\times1}} & {\scriptstyle \mathbf{0}_{3K\times1}}
\end{array}\right.}{\left.\begin{array}{c}
{\scriptstyle -\mathbf{t}_{K}}\\
{\scriptstyle \mathbf{0}_{3K\times1}}
\end{array}\right]}\in\mathbf{\mathbb{R}}^{4K\times11}\label{eq:L';T'}
\end{equation}
where
\begin{equation}\nonumber
\begin{cases}
\mathbf{k=}\left[\begin{array}{ccc}
1; 2; \ldots; K\end{array}\right], \text{ } \mathbf{M}_{h\_i}=[\mathbf{h}_{i}^{1};\mathbf{h}_{i}^{2};\ldots;\mathbf{h}_{i}^{K}],\\
\mathbf{M}_{U\_i}=\left[\mathbf{U}_{i}^{1};\mathbf{U}_{i}^{2};\ldots;\mathbf{U}_{i}^{K}\right],\text{ }
\mathbf{M}_{V\_i}=\left[\mathbf{V}_{i}^{1};\mathbf{V}_{i}^{2};\ldots;\mathbf{V}_{i}^{K}\right]\text{,}\\
\mathbf{t}_{K}=\left[\begin{array}{c}
\left(\frac{{\scriptstyle \mathbf{s}^{1}}}{{\scriptstyle cd_{1}^{k}}}\right)^{\mathrm{T}};\left(\frac{{\scriptstyle \mathbf{s}^{2}}}{{\scriptstyle cd_{1}^{k}}}\right)^{\mathrm{T}};\left(\frac{{\scriptstyle \mathbf{s}^{K}}}{{\scriptstyle cd_{1}^{k}}}\right)^{\mathrm{T}}\end{array}\right].
\end{cases}
\end{equation}

We further perform the following elementary operations on
$\overline{\mathbf{F}}_{arr\_i}$, $i=2,3,\cdots,N$:

(i) dividing the fourth column block by ${\Delta}_{t}$;

(ii) for $k=2,3,\cdots,K$, deducing the  $k\raisebox{0mm}{-}th$ row by the first row;

(iii) transforming the elements in the first row (except the third one) to zero by the third column block (the first element therein equals 1 while the other elements equal zero after the elementary operations listed above);

(iv) for $k=3,4,\cdots,K$, deducing the   $k\raisebox{0mm}{-}th$ row by the second row multiplied by $(k-1)$;

(v) transforming the elements in the second row (except the fourth one) to zero by the fourth column block (the second element therein equals 1 while the other elements equal zero after the elementary operations listed above);

(vi) moving column blocks 3 and 4 to columns blocks 1 and 2, respectively.

After the above operations, we obtain
\begin{equation}\nonumber
\begin{array}{l}
\overline{\mathbf{F}}_{arr\_i}^{\prime}=\left[\begin{array}{cc}
\mathbf{\mathbf{\bar{L}}_{i}} & \bar{\mathbf{T}}\end{array}\right]\end{array}\label{eq:F'_arr_i}
\end{equation}
where $\mathbf{\mathbf{\bar{L}}_{i}}$ and $\bar{\mathbf{T}}$ are
shown in (\ref{eq: L-BAR}) and (\ref{eq:T-BAR}), respectively. With the above elementary transformations, we
have 
\begin{equation}
\overline{\mathbf{F}}\sim\overline{\mathbf{F}}^{\prime}=\left[\begin{array}{ccccc}
\mathbf{\mathbf{\bar{L}}_{2}} &  &  &  & \mathbf{\bar{T}}\\
 & \mathbf{\mathbf{\bar{L}}_{3}} &  &  & \mathbf{\bar{T}}\\
 &  & \ddots &  & \mathbf{\vdots}\\
 &  &  & \mathbf{\mathbf{\bar{L}}_{N}} & \mathbf{\bar{T}}
\end{array}\right].\label{eq:F bar prime}
\end{equation}
It holds that $rank(\mathbf{F})=rank(\overline{\mathbf{F}})=rank(\overline{\mathbf{F}}^{\prime})$.
From the structure of $\overline{\mathbf{F}}^{\prime}$, we can see
that the block columns containing $\mathbf{\mathbf{\bar{L}}_{i}}$,
$i=2,...,N$, are independent of each other. A necessary condition
for $\overline{\mathbf{F}}^{\prime}$ to be of full  column rank is
that $\mathbf{\mathbf{\bar{L}}_{i}}$\ and $\mathbf{\bar{T}}$ are
of full column rank, respectively, $i=2,...,N$. This completes the proof.\end{proof of Theorem 2}

\begin{proof of Theorem 3}
 Here we take $j=2$ as an example. For $\overline{\mathbf{F}}^{\prime}$, we could perform elementary row block changes: for $i=3,\ldots,N$,
deduce $\mathbf{\bar{L}_{i}}$ row block by the first-row block and
obtain:
\begin{equation}
\left[\begin{array}{cccccc}
\mathbf{\bar{L}}_{2} &  &  &  &  & \mathbf{\bar{T}}\\
-\mathbf{\bar{L}}_{2} & \mathbf{\mathbf{\bar{L}}_{3}} &  &  &  & \mathbf{0}\\
\vdots &  & \ddots &  &  & \mathbf{\vdots}\\
-\mathbf{\bar{L}}_{2} &  &  &  & \mathbf{\mathbf{\bar{L}}_{N}} & \mathbf{0}
\end{array}\right].\label{eq:L_sufficient}
\end{equation}

 \noindent Denote the submatrix of this matrix as:
\begin{equation}\nonumber
\mathbf{M}_{2\_T}=\left[\begin{array}{cc}
\mathbf{\mathbf{\bar{L}}_{2}} & \mathbf{\bar{T}}\\
\mathbf{\vdots} & \mathbf{\vdots}\\
-\mathbf{\mathbf{\bar{L}}_{2}} & \mathbf{0}
\end{array}\right].\label{eq:M2_t}
\end{equation}

\noindent From the structure in (\ref{eq:L_sufficient}), we can see clearly that if:

(i) $\mathbf{M}_{2\_T}$ is of full column rank, and

(ii) $diag(\mathbf{\mathbf{\bar{L}}_{3},\ldots,\mathbf{\bar{L}}_{N}})$\ is
of full column rank,\\
then $\overline{\mathbf{F}}^{\prime}$ will be of full column rank. Due to the fact that $rank(\mathbf{F})=rank(\overline{\mathbf{F}})=rank(\overline{\mathbf{F}}^{\prime})$,
the Jacobian matrix $\mathbf{J}$ is of full column rank. Similarly, the same conditions hold when $j$ equals to $3,\ldots,N$. So
the Jacobian matrix $\mathbf{J}$ is of full column rank if any matrix
consisting of the $(j-1)\raisebox{0mm}{-}th$ column block and the last column block
in $\overline{\mathbf{F}}^{\prime}$ is of full column rank, $2\leq j\leq N$,
and $\mathbf{\mathbf{\bar{L}}_{i}}$ are of full column rank, $i=2,\ldots,N$
and $i\neq j$. This completes the proof. \end{proof of Theorem 3}

\begin{proof of Theorem 4}
(i) $\bar{\mathbf{T}}$ in (\ref{eq:T-BAR}) is of full column rank
only if a 3 \texttimes{} 3 matrix formed by at least one of the three-permutation
of its rows is full rank. For $\left(\mathbf{s}^{k}\right)^{\mathrm{T}}\in R^{1\times3}, 1\leq k\leq K$,
the necessary condition for $\bar{\mathbf{T}}$ to be of full column
rank is $K\geq5$. If $K<5$, $\bar{\mathbf{T}}$ can not be of the
full column rank.

(ii) Based on (\ref{eq:s}), when $\mathbf{\mathbf{s}}^{k}=\mathbf{\mathbf{{\scriptstyle \lambda}s}}^{k-1}$, we could derive $\frac{\mathbf{s}^{k}}{d_{1}^{k}}=\frac{\mathbf{s}^{k-1}}{d_{1}^{k-1}}$. From the expression of $\bar{\mathbf{T}}$, we can see that $\bar{\mathbf{T}}$ cannot
be of full rank if $\mathbf{s}^{k}$ is proportional to each other, $k=1,\cdots,K$.
In this situation, the sound
source positions at all time steps are collinear (together with the origin) w.r.t. the reference
microphone array frame.

(iii) If the sound source keeps moving in any planes of $x=\alpha y$,
$x=\beta z$, $y=\gamma z$ w.r.t. $\left\{ \mathrm{\mathbf{x}}_{arr\_1}\right\} $
at all moments, where $\alpha,\beta$, and $\gamma$ are arbitrary real
numbers, the sound source position $\mathbf{s}^{k},$ $1\leq k\leq K$,
could be expressed as $\left[\alpha s_{y}^{k};s_{y}^{k};s_{z}^{k}\right]$,
$\left[\beta s_{z}^{k};s_{y}^{k};s_{z}^{k}\right]$, and  $\left[s_{x}^{k};\gamma s_{z}^{k};s_{z}^{k}\right]$,
respectively. $\bar{\mathbf{T}}$ will not be of full column rank.

Specifically, if $\alpha=0$ or $\beta=0$ or $\gamma=0$, the
sound source position of $\mathbf{s}^{k}$ will have $s_{x}^{k}=0$,
$s_{y}^{k}=0$, and $s_{z}^{k}=0$, respectively, i.e., YOZ, XOZ, and
XOY planes. If the sound source keeps moving in the line of $x=\alpha y=\beta z$,
the situation will change to (ii). This completes the proof. \end{proof of Theorem 4}

\begin{proof of Theorem 5}
(i) If the sound source positions w.r.t. $\left\{ \mathrm{\mathbf{x}}_{arr\_i}\right\} $
at all of $K\,(K\geq5)$ time steps are collinear, i.e., $(\mathbf{\mathbf{s}}^{k}-\mathbf{x}_{arr\_i}^{p})=\lambda(\mathbf{\mathbf{s}}^{k-1}-\mathbf{x}_{arr\_i}^{p})$
is always true. For $i\geq2,$ $k=2,3,\ldots,K$, we can get the following expression:
\begin{equation}\nonumber
\begin{cases}
{\scriptstyle \left[\begin{array}{ccc}
\Delta x_{i}^{k}; \Delta y_{i}^{k}; \Delta z_{i}^{k}\end{array}\right]=\lambda\left[\begin{array}{ccc}
\Delta x^{k-1}; \Delta y^{k-1}; \Delta z^{k-1}\end{array}\right]},\\
\mathbf{h}_{i}^{k}=\mathbf{h}_{i}^{k-1}, \text{ } \mathbf{U}_{i}^{k}=\frac{1}{\lambda}\mathbf{U}_{i}^{k-1}, \text{ } \mathbf{V}_{i}^{k}=\mathbf{V}_{i}^{k-1}.\\
\end{cases}
\end{equation}
where $\mathbf{h},\mathbf{U}$, and $\mathbf{V}$ are defined in (\ref{eq:Block of neccessary matrix}). 

For an arbitrary single time step, we have $rank(\mathbf{U}_{i}^{k})=rank(\mathbf{R}_{i}^{\mathrm{T}}\mathbf{A})$
as shown in (\ref{eq:U formation}). It can also be seen that $det(\mathbf{A})=0$ and the
second-order sub-determinant of $\mathbf{A}$ is not equal to 0, we
know that $rank(\mathbf{A})=2$. $\mathbf{R}_{i}^{\mathrm{T}}$ is
a rotation matrix, $rank(\mathbf{R}_{i}^{\mathrm{T}})=3$, thus $rank(\mathbf{U}_{i}^{k})=2$.
Therefore, $\mathbf{\bar{L}}_{i}$ will not be of full column rank.

(ii) When $\theta_{arr\_i}^{y}=\frac{\pi}{2}$, for the corresponding microphone
array at any different time steps, $\mathbf{V}_{i}^{k}$ defined
in (\ref{eq:V formation}) has the same structure, i.e., 
\begin{equation}
\mathbf{V}_{i}^{k}=\left[
\begin{array}{cc}
{\scriptstyle0} & {\scriptstyle\Delta x_{i}^{k}c_{z}+\Delta y_{i}^{k}s_{z}}
\\
{\scriptstyle\Delta y_{i}^{k}s_{x-z}-\Delta x_{i}^{k}c_{x-z}} & {\scriptstyle%
\Delta z_{i}^{k}s_{x}} \\
{\scriptstyle\Delta y_{i}^{k}c_{x-z}+\Delta x_{i}^{k}s_{x-z}} & {\scriptstyle%
\Delta z_{i}^{k}c_{x}}%
\end{array}%
\right. \left.
\begin{array}{c}
{\scriptstyle0} \\
{\scriptstyle-\Delta y_{i}^{k}s_{x-z}+\Delta x_{i}^{k}c_{x-z}} \\
{\scriptstyle-\Delta y_{i}^{k}c_{x-z}-\Delta x_{i}^{k}s_{x-z}}%
\end{array}%
\right] ,  \nonumber
\end{equation}
where $s,c$ represent $sin,cos$, respectively and $rank(\mathbf{V}_{i}^{k})\equiv2$.
Therefore, the matrix of $\mathbf{\bar{L}}_{i}$ in (\ref{eq:F bar prime})
will not be of full column rank. This completes the proof.
\end{proof of Theorem 5}

\end{appendix}

\end{document}